\definecolor{orange}{rgb}{0,0,0.6}
\newcommand{\myparagraph}[1]{\noindent\textbf{#1.}}
\newtheorem{lmm}{Lemma}
\newtheorem{prop}{Proposition}
\newtheorem{remark}{Remark}
\newtheorem{cor}{Corollary}
\def\Ber{\text{Ber}}
\def\X{\mathbf{X}}
\def\Y{\mathbf{Y}}
\def\U{\mathbf{U}}
\def\V{\mathbf{V}}
\def\u{\mathbf{u}}
\def\v{\mathbf{v}}
\def\r{\mathbf{r}}
\def\E{\mathbb{E}}
\def\Var{\mathbb{V}}
\def\S{\mathcal{S}}
\def\1{\mathbf{1}}
\def\0{\mathbf{0}}
\newcommand{\minimize}[1]{\underset{\displaystyle #1}{\rm minimum} \;}
\newcommand{\diag}[1]{{\rm diag}(#1)}
\title{Dropout Criterion and Matrix Factorization}
\title{An Analysis of Dropout for Matrix Factorization}
\author{
  {Jacopo Cavazza$^1$}\thanks{Work accomplished while visiting Johns Hopkins University, Center for Imaging Science}$\;$, Connor Lane$^2$, Benjamin D. Haeffele$^2$, Vittorio Murino$^1$, Ren\'{e} Vidal$^2$ \\\\
  $^1$ Pattern Analysis and Computer Vision,
  Istituto Italiano di Tecnologia,
  Genova, 16163, Italy \\
  $^2$ Johns Hopkins University, Center for Imaging Science, Baltimore, MD 21218, USA \\ \\  \texttt{jacopo.cavazza@iit.it}, \texttt{connor.lane@jhu.edu}, \texttt{bhaeffele@jhu.edu}, \\ \texttt{vittorio.murino@iit.it}, \texttt{rvidal@cis.jhu.edu} 
   % $^1$ Pattern Analysis and Computer Vision \\
  %Istituto Italiano di Tecnologia\\
 % Genova, Italy \\
  %\texttt{jacopo.cavazza@iit.it} \\
  %% examples of more authors
  %\And
  %% Connor Lane, Benjamin Haeffele, R.Vidal \\
  %% Affiliation \\
  %% Address \\
  %% \texttt{email} \\
  %% \AND
  %% Coauthor \\
  %% Affiliation \\
  %% Address \\
  %% \texttt{email} \\
  %% \And
  %% Coauthor \\
  %% Affiliation \\
  %% Address \\
  %% \texttt{email} \\
  %% \And
  %% Coauthor \\
  %% Affiliation \\
  %% Address \\
  %% \texttt{email} \\
}
\begin{document}
% \nipsfinalcopy is no longer used

\maketitle

\begin{abstract}
Dropout is a simple yet effective algorithm for regularizing neural networks by randomly dropping out units through Bernoulli multiplicative noise, and for some restricted problem classes, such as linear or logistic regression, several theoretical studies have demonstrated the equivalence between dropout and a fully deterministic optimization problem with data-dependent Tikhonov regularization. This work presents a theoretical analysis of dropout for matrix factorization, where Bernoulli random variables are used to drop a factor, thereby attempting to control the size of the factorization. While recent work has demonstrated the empirical effectiveness of dropout for matrix factorization, a theoretical understanding of the regularization properties of dropout in this context remains elusive. This work demonstrates the equivalence between dropout and a fully deterministic model for matrix factorization in which the factors are regularized by the sum of the product of the norms of the columns. While the resulting regularizer is closely related to a variational form of the nuclear norm, suggesting that dropout may limit the size of the factorization, we show that it is possible to trivially lower the objective value by doubling the size of the factorization. We show that this problem is caused by the use of a fixed dropout rate, which motivates the use of a rate that increases with the size of the factorization. Synthetic experiments validate our theoretical findings.
\end{abstract}

\section{Introduction}

Dropout \cite{DropoutCORR,DropoutJMLR} is a popular algorithm for training neural networks designed to prevent overfitting and circumvent performance degradation while shifting from training/validation to testing. During dropout training, for each example/mini-batch, neural units are randomly suppressed from the network with probability $1 - \theta$. Mathematically, this is equivalent to sampling, for each unit, a Bernoulli random variable $r\sim\Ber(\theta)$ and suppressing that unit if and only if $r = 0$. This generates a sub-network sampled from the original one, whose weights are updated through a backpropagation step, while the weights of the suppressed units are left unchanged. Then, when a new example/mini-batch is processed, new Bernoulli random variables are sampled to generate a new subnetwork and a new set of suppressed units. Since all the sub-networks are sampled from the original architecture, the weights are shared and dropout can be interpreted as a model ensemble. Interestingly, it has been proven that the (weighted geometric) average prediction of all the subnetworks can be efficiently computed by a single forward step involving the full network whose weights are scaled by $\theta$ \cite{DropoutJMLR,Baldi1,Baldi2}.

%\textcolor{blue}{Rene says: strange paragraph, not clear if the statements are empirical observations or theoretical analysis. "main issue" for what?}
%\textcolor{blue}{Ben: Yes, I also don't understand what is being said here.  I know what co-adaptation is, but how is dropout supposed to prevent that?  By making the neurons "unreliable" in the sense that a neuron carrying important information has the potential to be removed it seems that dropout would actually make the representation more redundant (and with more co-adaptation) to be robust against a key neuron being removed.}
%Dropout specifically contrasts those co-adaptations between neurons which produces overcomplicated model to explain the data, ultimately resulting in overfitting. cooperates to produce an over-complicated model That is, as soon as the training proceeds, overfitting is postulated to be caused by the activations of neurons which are highly correlated as a function of the input data \cite{DropoutCORR,DropoutJMLR}. By randomly suppressing neurons in the network, the idea is that those dangerous co-adaptations can be removed, restoring an improved situation where the neurons are more ``independent''. This empirically reflects into a better generalization capability.

Recently, significant efforts have been made to understand the theoretical properties of dropout as an implicit regularization scheme \cite{Wager:NIPS13,Baldi1,Baldi2,Gal2016Dropout}. While in principle dropout is a stochastic training method based on randomly suppressing units from the network architecture, recent work has demonstrated its equivalence to a deterministic training scheme based on minimizing a loss augmented with data-dependent regularization. Although this equivalence requires a Taylor \cite{Wager:NIPS13,Baldi1,Baldi2} or Bayesian \cite{Gal2016Dropout} approximation to hold, these results have explained many properties of the regularizer induced by dropout. Indeed, it has been shown that dropout induces a non-monotone and non-convex function, which is sometimes divergent as a function of the network weights \cite{Wager:NIPS13,JMLR:v16:helmbold15a}. It is also known how dropout can handle model uncertainty in deep networks \cite{Gal2016Dropout} and when the dropout-regularized optimization problem yields a unique minimizer \cite{JMLR:v16:helmbold15a,Baldi2}. However, such a general understanding of dropout is often obtained by restricting the analysis to very simple models, such as linear or logistic regression \cite{Wager:NIPS13,NIPS2014_5502,Baldi1,Baldi2}.

The goal of this work is to provide a theoretical analysis of dropout in the context of matrix factorization. Given a fixed $m \times n$ matrix $\mathbf{X}$, the task is to find factors $\mathbf{U}$ and $\mathbf{V}$ of dimensions $m \times d$ and $n \times d$, respectively, such that $\mathbf{X} \approx \mathbf{U} \mathbf{V}^\top$ for some $d \geq \rho(\mathbf{X}) := {\rm rank}(\mathbf{X})$. In this context, applying the dropout criterion means to sample a $d$-dimensional random vector $\mathbf{r} = [r_1,\dots,r_d]$ with i.i.d. Bernoulli entries $r_i\sim\Ber(\theta)$ and to approximate $\mathbf{X}$ as $\mathbf{U}{\rm diag}(\mathbf{r})\mathbf{V}^\top$, where for any $j = 1,\dots, d$, the $j$-th columns of $\mathbf{U}$ and $\mathbf{V}$ are suppressed if $r_j = 0$ and left unchanged otherwise. %\textcolor{red}{Intuitively, the role of $\mathbf{r}$ can be explained as promoting the statistical independence of the columns of $\mathbf{U}$ and $\mathbf{V}$ while reconstructing $\mathbf{X}$.} 
%Therefore, dropout for matrix factorization can be seen as an alternative paradigm to nuclear norm regularization \cite{HaeffeleYV14,Burer2003,cina}, where the redundancy of the representation is controlled with a more classical sparsity/low-rank prior. 

To date, the use of dropout in matrix factorization has been investigated primarily from an empirical perspective \cite{Zhai:CoRR15,He2016}. Indeed, the analysis of \cite{He2016} is primarily empirical and \cite{Zhai:CoRR15} only develops a formal analogy between matrix factorization and a shallow encoder-decoder in order to combine the two and boost performance. To the best of our knowledge, the theoretical understanding of the regularization properties induced by dropout in matrix factorization remains largely an open problem.

\myparagraph{Paper contributions}
In this work, we study the theoretical properties of dropout in the context of matrix factorization. We first show that dropout regularization induces an equivalent deterministic optimization problem with regularization on the factors. Specifically, we show that the expected loss $\E_{\r} \| \X - \frac{1}{\theta}\U\diag \r \V^\top\|_F^2$ is equal to the regular loss $\| \X - \U \V^\top\|_F^2$ augmented with the regularizer $\Omega(\U,\V) = \sum_{k=1}^d \|\u_k\|_2^2 \|\v_k\|_2^2$, scaled by $\frac{1-\theta}{\theta}$, where $\u_k$ and $\v_k$ denote the $k$th columns of $\U$ and $\V$, respectively. This result provides an immediate interpretation for dropout in the traditional setting of factorization with a \emph{fixed} size of the factors, $d$. It is important to note, however, that in the case of matrix factorization, the number of columns in $\U$ and $\V$, $d$, is a model design parameter that must be either specified \textit{a priori} or learned in some way.  As the overall goal of dropout regularization is to prevent model over-fitting and constrain the degrees of freedom in the model, we also consider the case where the value of $d$ is learned directly from the data via an induced dropout regularization.

In the more complex case were $d$ is allowed to vary, the form of the regularizer $\Omega$ is very similar to the one used in the variational form of the nuclear norm, $\sum_{k=1}^d \|\u_k\|_2 \|\v_k\|_2$, suggesting that dropout could be used to induce low-rank factorizations (and hence limit the value of $d$). However, our analysis shows that when the dropout rate $1-\theta$ is independent of $d$, dropout regularization does nothing to constrain the size of the factorization (and in fact promotes factorizations with large numbers of columns in $\U$ and $\V$). %is to increase the size of the factorization -- i.e., add more columns to $\mathbf{U}$ and $\mathbf{V}$ -- without affecting the loss.
This leads us to propose a novel adaptive dropout strategy in which the dropout rate increases with $d$ to bound the size of the factorization and learn the appropriate factorization size directly from the data.   %We show that the proposed strategy induces a deterministic pseudo-norm regularizer that effectively limits the size of the factorization, prove that the convex envelope of the induced regularizer is the nuclear norm, and demonstrate the effectiveness of the proposed approach in experiments. 
In particular, the contributions of this work include the following:
\begin{enumerate}[leftmargin=*]
	\item We analyze the regularization term induced by dropout when applied to matrix factorization with the squared Frobenius loss and derive an equivalent optimization problem where the same loss function is now regularized with a non-convex function. %Differently from previous literature \cite{Zhai:CoRR15,He2016}, we carry out a theoretical analysis with a variable number of columns $d$ in the factorization.
Additionally, our analysis also considers the case where the number of columns, $d$, is allowed to be variable and learned directly from the data. 
	
	\item We show that for a fixed dropout rate $1-\theta$, the regularizer induced by dropout does not control the size of the factorization, and in fact promotes solutions with a large value of $d$. We propose to solve this issue by using an adaptive choice for $\theta$ that depends on $d$.

	\item We show that the proposed variable dropout rate that scales with $d$ induces a pseudo-norm on the product of the factors $\mathbf{U} \mathbf{V}^\top$ which limits the rank of the factorization and show that the convex envelope of the induced pseudo-norm is equal to the squared nuclear norm of $\mathbf{U}\mathbf{V}^\top$.
	
	\item Numerical simulations validate the equivalence between the original dropout problem and its equivalent deterministic counterpart. We also demonstrate that our proposed variable dropout rate strategy correctly recovers low-rank matrices corrupted with noise, whereas dropout regularization with a fixed dropout rate does not. 
	%, no matter how good is the quality of the reconstruction of $\mathbf{X}$. 
	%\textcolor{red}{We also evaluate the proposed adaptive strategy and show that it induces low-rank factorizations.}
\end{enumerate}

\myparagraph{Paper outline} The remainder of the paper is organized as follows. In Section \ref{sez:rw} we briefly review the literature related to dropout. Sections \ref{sez:theory}, \ref{sez:theory2} and \ref{sez:theory3} present our theoretical analysis of the dropout criterion for matrix factorization. Our findings are supported by numerical simulations in Section \ref{sez:sim} and concluding remarks are given in Section \ref{sez:end}. 

\section{Related Work}\label{sez:rw}

%The origins of dropout can be traced back to the literature on learning representations from input data corrupted by noise \cite{Bishop:NC95,Bengio:ICML09,Rifai:CoRR11}. \textcolor{red}{The main difference with the original formulation of dropout \cite{DropoutCORR,DropoutJMLR} and many algorithmic variations variations that have been proposed \cite{ImprovedDropout,FastDropoutRecurrent,DropConv,AdaptiveDropout,AnnealedDropout,2016arXiv161101353A,CurriculumDropout} is that in the case of neural networks noise corruption is applied to each one of the feature representations layers. \textbf{What is the point of the above sentence?}}
The origins of dropout can be traced back to the literature on learning representations from input data corrupted by noise \cite{Bishop:NC95,Bengio:ICML09,Rifai:CoRR11}, and since the original formulation \cite{DropoutCORR,DropoutJMLR}, many algorithmic variations have been proposed \cite{ImprovedDropout,FastDropoutRecurrent,DropConv,AdaptiveDropout,AnnealedDropout,2016arXiv161101353A,CurriculumDropout}. Further, the empirical success of dropout for neural network training has motivated several works to investigate its formal properties from a theoretical point of view.
Wager et al. \cite{Wager:NIPS13} analyze dropout applied to the logistic loss for fitting $(x, y)$ data pairs where the distribution of $y$ given $x$ is described by a generalized linear model. By means of a Taylor approximation, they show that dropout induces a regularizer that depends on $x$ but not on $y$. Following on this line of work, Hembold and Long \cite{JMLR:v16:helmbold15a} discuss mathematical properties of the dropout regularizer (such as non-monotonicity and non-convexity) and derive a sufficient condition to guarantee a unique minimizer for the dropout criterion. 
Baldi and Sadowski \cite{Baldi1,Baldi2} consider dropout applied to deep neural networks with sigmoid activations and prove that the weighted geometric mean of all of the sub-networks can be computed with a single forward pass.
Wager et al. \cite{NIPS2014_5502} investigate the impact of dropout on the generalization error in terms of the bias-variance trade-off. Specifically, they present a theoretical analysis of the benefits related to dropout training under a Poisson topic model assumption in terms of a more favorable bound on the empirical risk minimization.
Finally, Gal and Ghahramani \cite{Gal2016Dropout} endow neural networks with a Bayesian framework to handle uncertainty of the network's predictions and investigate the connections between dropout training and inference for deep Gaussian processes.

In the context of matrix factorization, only a few works have investigated the dropout criterion. He et al. \cite{He2016} leverage the formal analogy between matrix factorization and shallow neural networks, which inspires the use of dropout for regularization and results in a model with better generalization abilities. However, the benefits of this combined approach are only experimental and no theoretical analysis is provided.  The authors of \cite{Zhai:CoRR15} provide some theoretical analysis for dropout applied to matrix factorization, but only as an argument to unify matrix factorization and encoder-decoder architectures.  To the best of our knowledge, there exists no theoretical analysis of the properties of the implicit regularization performed by dropout training for matrix factorization. Our paper aims to fill this gap.

%He et al. \cite{He2016} leverage the formal analogy between matrix factorization and shallow neural networks, which inspires the use of dropout for regularization and results in a model with better generalization abilities \textcolor{red}{Do they have no theory?  Need to mention why this paper is insufficient.}. Also, while the theoretical analysis of \cite{Zhai:CoRR15} is a little more detailed, it only provides an argument to unify matrix factorization and encoder-decoder architectures, so that the two methods can be combined in order to improve on performance.  To the best of our knowledge, no theoretical analysis of the properties of the implicit regularization performed by dropout training is presented. Our paper aims to fill this gap.

\section{Dropout Criterion and Matrix Factorization}
\label{sez:theory} 

Given a fixed $m \times n$ matrix $\mathbf{X}$, we are interested in the problem of factorizing $\mathbf{X}$ as the product $\mathbf{U} \mathbf{V}^\top$, where $\mathbf{U}$ is $m \times d$ and $\mathbf{V}$ is $n \times d$, for some $d \geq \rho(\mathbf{X}) := {\rm rank}(\mathbf{X})$. In order to apply dropout to matrix factorization, we consider a random vector $\r = [r_1,\dots,r_d]$ whose elements are distributed as $r_i \stackrel{\text{i.i.d.}}{\sim} \Ber(\theta)$ and write the \emph{dropout criterion} \cite{Wager:NIPS13,JMLR:v16:helmbold15a,Baldi1,Baldi2} as the following optimization problem. %\textcolor{red}{We never explain why we're using a variable $d$.  We just jump straight into it here.}
\begin{equation}\label{eq:P1}
\min_{\U,\V,d} \mathbb{E}_{\mathbf{r}} \left\| \mathbf{X} - \dfrac{1}{\theta} \mathbf{U} {\rm diag}(\mathbf{r}) \mathbf{V}^\top \right\|_F^2.
\end{equation}
Here $\| \cdot \|_F$ denotes the Frobenius norm of a matrix,
%(or, equivalently, the Euclidean norm $\| \cdot \|_2$ of its vectorization), 
$\mathbb{E}_{\mathbf{r}}$ denotes the expected value with respect to $\mathbf{r}$ and the minimization is carried out over $\mathbf{U}$, $\mathbf{V}$ and $d$.  Recall that we allow the size of the factorization, $d$, to be variable and seek to learn it directly from the data via the dropout regularization.

To see why the minimization of the above criteria can be achieved by dropping out columns of $\U$ and $\V$, observe that when $d$ is fixed and we use a gradient descent strategy, the gradient of the expected value is equal to the expected value of the gradient. Therefore, if we choose a stochastic gradient descent approach in which the expected gradient at each iteration is replaced by the gradient for a fixed sample $\r$, we obtain
\begin{align}
\label{eq:SGD}
\begin{bmatrix} \U^{t+1} \\ \V^{t+1} \end{bmatrix} = 
\begin{bmatrix} \U^t \\ \V^t \end{bmatrix} + \frac{2\epsilon}{\theta}
\begin{bmatrix} 
(\X-\U^t\diag{\r^t}\V^{t\top})\V^t \\
(\X-\U^t\diag{\r^t}\V^{t\top})^\top\U^t\\
\end{bmatrix} \diag{\r^t},
\end{align}
where $\epsilon > 0$ is the step size. Therefore, at iteration $t$, the columns of $\U$ and $\V$ for which $r_i^t = 0$ are not updated, and the gradient update is only applied to the columns for which $r_i^t = 1$.  

In order to achieve a better understanding of the implication of such random suppressions of columns, we consider a more general setting corresponding to a variable parameter $d$. In such a case, as an alternative optimization procedure, consider taking the expected value of the objective first. Following prior work for least squares fitting \cite{DropoutJMLR}, logistic regression \cite{Wager:NIPS13,Baldi1,Baldi2} and encoder-decoder learning \cite{Zhai:CoRR15}, we can show that \eqref{eq:P1} is equivalent to
\begin{equation}\label{eq:P2}
\min_{\U,\V,d} \left[ \| \mathbf{X} - \mathbf{U}  \mathbf{V}^\top \|_F^2 + \dfrac{1 - \theta}{\theta} \sum_{k = 1}^d \| \mathbf{u}_k \|_2^2 \| \mathbf{v}_k \|_2^2 \right],
\end{equation}
where $\mathbf{u}_k \in \mathbb{R}^m$ and $\mathbf{v}_k \in \mathbb{R}^n$ denote the $k$-th column of $\mathbf{U}$ and $\mathbf{V}$, respectively, for $k = 1,\dots,d$. The equivalence comes from the following theoretical result.\footnote{Detailed proofs of all theoretical results are presented in the Supplementary Material.}
\begin{prop}\label{prop:=}
For arbitrary $\theta,\mathbf{U},\mathbf{V}$ and $\mathbf{X}$,
	\begin{equation}\label{eq:=}
	\mathbb{E}_{\mathbf{r}} \left\| \mathbf{X} - \dfrac{1}{\theta} \mathbf{U} {\rm diag}(\mathbf{r}) \mathbf{V}^\top \right\|_F^2 = \| \mathbf{X} - \mathbf{U}  \mathbf{V}^\top \|_F^2 + \dfrac{1 - \theta}{\theta} \sum_{k = 1}^d \| \mathbf{u}_k \|_2^2 \| \mathbf{v}_k \|_2^2. 
	\end{equation}
\end{prop}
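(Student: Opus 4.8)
The plan is to expand the squared Frobenius norm into a sum of three terms, push the expectation through by linearity, and evaluate the elementary Bernoulli moments. Writing $\frac{1}{\theta}\U\diag{\r}\V^\top = \frac{1}{\theta}\sum_{k=1}^d r_k\,\u_k\v_k^\top$ and using the trace inner product $\langle A,B\rangle = \mathrm{tr}(A^\top B)$, I would first record the algebraic identity, valid for every realization of $\r$,
\begin{equation*}
\left\| \X - \tfrac{1}{\theta}\U\diag{\r}\V^\top \right\|_F^2 = \|\X\|_F^2 - \frac{2}{\theta}\sum_{k=1}^d r_k\langle\X,\u_k\v_k^\top\rangle + \frac{1}{\theta^2}\sum_{k=1}^d\sum_{l=1}^d r_k r_l\,\langle \u_k\v_k^\top,\u_l\v_l^\top\rangle .
\end{equation*}

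Next I would take $\E_{\r}$ term by term. The key facts are that, since $r_k\sim\Ber(\theta)$, we have $\E[r_k]=\theta$ and $\E[r_k^2]=\E[r_k]=\theta$ (because $r_k^2=r_k$ for a $\{0,1\}$-valued variable), while independence gives $\E[r_k r_l]=\theta^2$ whenever $k\neq l$. Applying these, the linear term contributes $-2\sum_k\langle\X,\u_k\v_k^\top\rangle = -2\langle\X,\U\V^\top\rangle$, and in the quadratic term I would split the double sum into its diagonal ($k=l$) and off-diagonal ($k\neq l$) parts, obtaining
\begin{equation*}
\frac{1}{\theta^2}\left[\theta\sum_{k=1}^d\|\u_k\v_k^\top\|_F^2 + \theta^2\sum_{k\neq l}\langle\u_k\v_k^\top,\u_l\v_l^\top\rangle\right].
\end{equation*}

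The remaining step is bookkeeping: I complete the off-diagonal sum to a full double sum at the cost of subtracting the diagonal, $\sum_{k\neq l}\langle\u_k\v_k^\top,\u_l\v_l^\top\rangle = \sum_{k,l}\langle\u_k\v_k^\top,\u_l\v_l^\top\rangle - \sum_k\|\u_k\v_k^\top\|_F^2 = \|\U\V^\top\|_F^2 - \sum_k\|\u_k\v_k^\top\|_F^2$, so the quadratic contribution becomes $\|\U\V^\top\|_F^2 + \bigl(\tfrac{1}{\theta}-1\bigr)\sum_k\|\u_k\v_k^\top\|_F^2$. Collecting the three pieces reconstructs $\|\X\|_F^2 - 2\langle\X,\U\V^\top\rangle + \|\U\V^\top\|_F^2 = \|\X-\U\V^\top\|_F^2$ plus the residual $\tfrac{1-\theta}{\theta}\sum_k\|\u_k\v_k^\top\|_F^2$, and finally I use $\|\u_k\v_k^\top\|_F^2 = \|\u_k\|_2^2\|\v_k\|_2^2$ for rank-one matrices to match the stated form. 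There is no genuine obstacle here — the argument is a direct computation — but the one place to be careful is the diagonal/off-diagonal split of $\E[r_k r_l]$, since conflating $\E[r_k^2]=\theta$ with $(\E[r_k])^2=\theta^2$ is precisely what would make the induced regularizer vanish.
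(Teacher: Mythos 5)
Your proof is correct, and it is essentially the same computation as the paper's: the paper organizes it via the entrywise bias--variance identity $\E\|A\|_F^2 = \|\E A\|_F^2 + \sum_{ij}\Var(A_{ij})$ together with the variance of a sum of independent Bernoulli terms, which is exactly your diagonal/off-diagonal split of $\E[r_k r_l]$ in different packaging. Both arguments reduce to the same second-moment facts ($\E[r_k^2]=\theta$, $\E[r_k r_l]=\theta^2$ for $k\neq l$) and the rank-one identity $\|\u_k\v_k^\top\|_F^2=\|\u_k\|_2^2\|\v_k\|_2^2$, so no gap remains.
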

\begin{proof}
Note that the well known equality $\E(a^2) = \E(a)^2 + \Var(a)$ for a scalar random variable $a$ can be extended to matrices as $\E(\|A\|_F^2) = \| \E(A) \|_F^2 + \1^\top \Var(A) \1$ as soon as the entries in $A$ are independent. Applying it to $A = \mathbf{X} - \frac{1}{\theta}\mathbf{U}\diag \r  \mathbf{V}^\top$, and noticing that $\E (\diag \r) = \theta I $, we obtain $\E(A) = \X - \U\V^\top$. Since
$\Var(A) \!\!=\!\! \frac{1}{\theta^2} \Var( \U\diag\r\V^\top)$ and $ \U\diag\r\V^\top = \sum_k \u_k\v_k^\top r_k$, we have
\begin{align}
\theta^2 \1^\top \Var(A) \1 = \sum_{ijk} \Var(u_{ik}v_{jk}r_k) =  \sum_{ijk} u_{ik}^2 v_{jk}^2 \Var(r_k)  =\theta(1-\theta) \sum_{k} \|\u_{k}\|_2^2 \|\v_{k}\|_2^2
\end{align}
due to the fact that $r_k$ are independent. This completes the proof.
\end{proof}

Therefore, the optimization problem \eqref{eq:P1} can be alternatively tackled by solving \eqref{eq:P2} where the same loss function (quadratic Frobenius norm) enforces $\mathbf{U}\mathbf{V}^\top$ to be close to $\mathbf{X}$. Moreover, notice that the random suppression of columns in $\mathbf{U}$ and $\mathbf{V}$ used in \eqref{eq:SGD} is replaced in \eqref{eq:P2} by the regularizer
\begin{equation}\label{eq:omega}
\Omega(\mathbf{U},\mathbf{V}) = \sum_{k = 1}^d \| \mathbf{u}_k \|_2^2 \| \mathbf{v}_k \|_2^2
\end{equation}
which is weighted by the factor $\frac{1 - \theta}{\theta}$, where $\theta$ the expected value of $r_1,\dots,r_d \sim \Ber(\theta)$. 

\begin{remark}
Notice that in \eqref{eq:=} we have to assume that $\theta \neq 0$ in order to avoid division by zero. This is not a problem because when $\theta = 0$ the probability of suppressing any columns in $\mathbf{U}$ and $\mathbf{V}$ will be $1$, resulting in a degenerate case. On the other hand, we can also disregard the case $\theta = 1$, in which no column is suppressed at all and it is trivial to verify that the left hand-side of \eqref{eq:=} coincides with the right-hand side. Thus, in what follows, we will assume $0 < \theta < 1$.
\end{remark}

\section{Connections with Nuclear Norm Minimization}\label{sez:theory2}
To give a better understanding of \eqref{eq:omega}, we first investigate its relationship with a popular regularizer for matrix factorization, namely the nuclear norm $\| \mathbf{Y} \|_\star$. Defined as the sum of the singular values of $\mathbf{Y}$, the nuclear norm is widely used as a convex relaxation of the matrix rank and can optimally recover low-rank matrices under certain conditions \cite{Recht:2010}. The connection between $\| \cdot \|_\star$ and \eqref{eq:omega} becomes clearer when considering the following variational form of the nuclear norm \cite{Srebro:NIPS2004,Rennie:2005}:
\begin{equation}\label{eq:Nstar}
%\| \mathbf{X} \|_\star = \min_{\scriptsize \ccc} \sum_{k = 1}^d \| \mathbf{u}_k \|_2 \| \mathbf{v}_k \|_2.
\| \mathbf{Y} \|_\star = \inf_{d,\U,\V} \sum_{k = 1}^d \| \mathbf{u}_k \|_2 \| \mathbf{v}_k \|_2 
\quad \text{s.t.} \quad 
d \geq \rho( \mathbf{Y}), \U \in \mathbb{R}^{m\times d}, \V \in \mathbb{R}^{n\times d} ~ \text{and}~ \U\V^\top = \mathbf{Y}.
\end{equation}
This fact is used in \cite{Bach:2008,Bach:2013,HaeffeleYV14,Haeffele:2015} to show that the convex optimization problem $\min_{\mathbf{Y}} \|\X -  \mathbf{Y}\|_F^2 + \lambda \| \mathbf{Y}\|_\star$ is equivalent to the non-convex optimization problem 
\begin{equation}
\min_{\U,\V,d} \| \X - \U\V^\top\|_F^2 + \lambda \sum_{k = 1}^d \| \mathbf{u}_k \|_2 \| \mathbf{v}_k \|_2 
\quad \text{s.t.} \quad \U \in \mathbb{R}^{m\times d}, \V \in \mathbb{R}^{n\times d}
\end{equation}
in the sense that if $(\U,\V)$ is a local minimizer of the non-convex problem such that for some $k$ we have $\u_k = \0$ and $\v_k = \0$, then  $(\U,\V)$ is a global minimizer of the non-convex problem and $\mathbf{Y} = \U\V^\top$ is a global minimizer of the convex problem.

But what does the variational form of the nuclear norm tells us about the regularizer $\Omega$ in \eqref{eq:omega} induced by dropout? Notice the extreme similarity between the functional optimized in \eqref{eq:Nstar} and \eqref{eq:omega}: the only difference is that the Euclidean norms of the columns of $\U$ and $\V$ are squared in \eqref{eq:omega}. Naively, one can argue that such difference is extremely marginal and therefore interpret dropout for matrix factorization as an unexpected way to achieve nuclear norm regularization on the factorization.

However, this is \emph{not} the case. To see this, inspired by the variational form of the nuclear norm in \eqref{eq:Nstar}, let us consider the following optimization problem:
\begin{equation}
\inf_{d,\U,\V} \sum_{k = 1}^d \| \mathbf{u}_k \|_2^2 \| \mathbf{v}_k \|_2^2 
\quad \text{s.t.} \quad 
d \geq \rho( \mathbf{Y}), \U \in \mathbb{R}^{m\times d}, \V \in \mathbb{R}^{n\times d} ~ \text{and}~ \U\V^\top = \mathbf{Y}.
\end{equation}
Suppose that we are given any set of factors $\overline{\mathbf{U}}$ and $\overline{\mathbf{V}}$, both with $\overline{d}$ columns, such that $\overline{\mathbf{U}}~ \overline{\mathbf{V}}^\top = \Y$. Then, we can construct a pair of matrices $\mathbf{A} = \frac{\sqrt{2}}{2} [\overline{\mathbf{U}},\overline{\mathbf{U}}] \in \mathbb{R}^{m \times 2\overline{d}}$ and $\mathbf{B} = \frac{\sqrt{2}}{2} [\overline{\mathbf{V}},\overline{\mathbf{V}}] \in \mathbb{R}^{n \times 2\overline{d}}$ such that $\mathbf{A}\mathbf{B}^\top = \Y$. However, observe that $\Omega(\mathbf{A},\mathbf{B}) = \frac{1}{2} \Omega(\mathbf{U},\mathbf{V})$, which implies that the regularizer $\Omega$ does not penalize the size of the factorization. On the contrary, it encourages factorizations with a large number of columns, as we can always reduce the value of $\Omega$ by increasing the number of columns, which provides the main argument to prove the following proposition.
\begin{prop}\label{prop:zero}
	The infimum of the regularizer $\Omega$ in \eqref{eq:omega} is equal to zero, i.e.,
	\begin{equation}\label{eq:zero}
%	\min_{\scriptsize \ccc} \sum_{k = 1}^d \| \mathbf{u}_k \|_2^2 \| \mathbf{v}_k \|_2^2 = 0,
0 = \inf_{d,\U,\V} \sum_{k = 1}^d \| \mathbf{u}_k \|_2^2 \| \mathbf{v}_k \|_2^2 
\quad \text{s.t.} \quad 
d \geq \rho(\X), \U \in \mathbb{R}^{m\times d}, \V \in \mathbb{R}^{n\times d} ~ \text{and}~ \U\V^\top = \X.
	\end{equation} 
\end{prop}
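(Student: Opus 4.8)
The plan is to promote the doubling construction described just before the statement into a quantitative bound and then iterate it indefinitely. First I would record the trivial lower bound: every feasible point of \eqref{eq:zero} satisfies $\sum_{k=1}^d \|\u_k\|_2^2\|\v_k\|_2^2 \geq 0$, being a finite sum of products of squared norms, so the infimum in \eqref{eq:zero} is at least $0$. It therefore remains only to show that for every $\varepsilon > 0$ there is a feasible pair $(\U,\V)$ with $\Omega(\U,\V) < \varepsilon$.

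To produce such a pair I would start from one fixed feasible factorization, for instance the one obtained from the compact SVD $\X = W\Sigma Z^\top$ by setting $\overline{\mathbf{U}} = W\Sigma^{1/2}$ and $\overline{\mathbf{V}} = Z\Sigma^{1/2}$; these have $\overline d = \rho(\X) \geq \rho(\X)$ columns, satisfy $\overline{\mathbf{U}}\,\overline{\mathbf{V}}^\top = \X$, and give a finite value $c := \Omega(\overline{\mathbf{U}},\overline{\mathbf{V}})$. Next I would make the single doubling step precise: given any feasible $(\U,\V)$ with $d$ columns, the pair $\mathbf{A} = \tfrac{\sqrt2}{2}[\U,\U]$, $\mathbf{B} = \tfrac{\sqrt2}{2}[\V,\V]$ has $2d$ columns, is again feasible since $\mathbf{A}\mathbf{B}^\top = \tfrac12(\U\V^\top + \U\V^\top) = \X$ and $2d \geq d \geq \rho(\X)$, and satisfies $\Omega(\mathbf{A},\mathbf{B}) = \tfrac12\,\Omega(\U,\V)$: scaling a column by $\tfrac{\sqrt2}{2}$ multiplies its squared Euclidean norm by $\tfrac12$, so each product of squared norms is multiplied by $\tfrac14$, while every original column now appears twice, for a net factor $2\cdot\tfrac14 = \tfrac12$. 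Applying this step $t$ times to $(\overline{\mathbf{U}},\overline{\mathbf{V}})$ yields a feasible pair whose regularizer equals $2^{-t}c$, which is smaller than $\varepsilon$ once $t$ is large enough. Together with the lower bound this gives \eqref{eq:zero}.

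The argument is elementary, so I do not expect a genuine obstacle; the only delicate point is the bookkeeping of the scalar $\tfrac{\sqrt2}{2}$ and of the column count, so that squaring the norms really produces the factor $\tfrac14$ per column and hence the contraction factor $\tfrac12$ per iteration — this is precisely where $\Omega$ in \eqref{eq:omega} behaves differently from the variational form of the nuclear norm in \eqref{eq:Nstar}, for which the same construction leaves the objective unchanged. If desired, one can additionally observe that the infimum is never attained when $\X \neq \0$, since $\Omega(\U,\V) = 0$ would force $\u_k\v_k^\top = \0$ for all $k$, hence $\U\V^\top = \0 \neq \X$.
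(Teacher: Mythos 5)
Your proof is correct and rests on exactly the same key construction as the paper's: the column-doubling map $(\U,\V)\mapsto\bigl(\tfrac{\sqrt2}{2}[\U,\U],\,\tfrac{\sqrt2}{2}[\V,\V]\bigr)$, which preserves the constraint $\U\V^\top=\X$ while halving $\Omega$. If anything, your finish is slightly cleaner: you iterate the construction from an explicit feasible factorization to obtain feasible values $2^{-t}c\to 0$, whereas the paper concludes by contradiction under the tacit assumption that the infimum is attained as a minimum, a step your direct argument (and your closing remark that the infimum is not attained for $\X\neq\0$) avoids.
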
 

As a consequence, using dropout for matrix factorization does nothing to limit the size of the factorization (i.e., limit the number of columns, $d$), due to the fact that the optimization problem solved by dropout \eqref{eq:P1} is equivalent to the regularized factorization problem \eqref{eq:P2}, which is always reduced in value by increasing the number of columns in $(\mathbf{U},\mathbf{V})$.

\section{Matrix Dropout with Adaptive Dropout Rate}\label{sez:theory3}

As discussed in the previous section, a key drawback of the regularizer $\Omega$ is that when $d$ is increased the value of $\Omega$ is decreased (for example, $d \to 2d$ results in $\Omega \to \Omega/2$). In order to compensate for this drawback, we replace the fixed choice for $\theta$ in \eqref{eq:P2} with an adaptive parameter $\theta(d)$, $d \in \mathbb{N} \setminus \{ 0 \}$, so that the weighting factor in \eqref{eq:P2}, $\frac{1 - \theta(d)}{\theta(d)}$, increases as $d$ increases. Specifically, we are interested in defining a function $\theta = \theta(d)$ such that the weighting factor $\tfrac{1-\theta(d)}{\theta(d)}$ in \eqref{eq:P2} grows linearly with $d$,
\begin{equation}\label{eq:base}
%\dfrac{1 - \theta(2d)}{\theta(2d)} = 2 \dfrac{1 - \theta(d)}{\theta(d)}.
\dfrac{1-\theta(kd)}{\theta(kd)} = k \dfrac{1-\theta(d)}{\theta(d)} \ \ \forall \ k \in \mathbb{N}.
\end{equation}

To accomplish this, given any $\bar{\theta}$ such that $0 < \bar{\theta} < 1$ we define $\theta(d)$ as
\begin{equation}\label{eq:thetad}
	\theta \colon \mathbb{N} \setminus \{ 0\} \rightarrow \mathbb{R}, \qquad \theta(d) = \dfrac{\overline{\theta}}{d - (d - 1)\overline{\theta}.}
\end{equation}
and note that $\theta(d)$ satisfies the following proposition.
\begin{prop}\label{prop:adapt}
	%For any $0 < \bar \theta < 1$, define
	%\begin{equation}\label{eq:thetad}
	%\theta \colon \mathbb{N} \setminus \{ 0\} \rightarrow \mathbb{R}, \qquad \theta(d) = \dfrac{\overline{\theta}}{d - (d - 1)\overline{\theta}}.
	%\end{equation}
	
	For $\theta(d)$ as defined in \eqref{eq:thetad}, the following properties hold.
	\begin{enumerate}
		\item $0 < \theta(d) < 1$ for all $d \in \mathbb{N} \setminus \{ 0\}$. 
		\item $\dfrac{1 - \theta(kd)}{\theta(kd)} = k \dfrac{1-\theta(d)}{\theta(d)}$ for all $k \in \mathbb{N} \setminus \{ 0 \}$.
	%we obtain
	%	$\lambda_{2d} = 2 \lambda_{d}$ for any $d$ since 
	%	$\lambda_{d} = d \frac{1 - \overline{\theta}}{\overline{\theta}}$.
	\end{enumerate}
\end{prop}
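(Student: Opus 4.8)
The plan is to prove both claims by direct algebraic manipulation of the closed form \eqref{eq:thetad}, the crux being to simplify the quantity $\tfrac{1-\theta(d)}{\theta(d)}$ into a form that is manifestly linear in $d$; once that is available, property~2 is immediate, and property~1 reduces to checking the sign and magnitude of the numerator and denominator in \eqref{eq:thetad}.

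For the first property, I would rewrite the denominator appearing in \eqref{eq:thetad} as $d-(d-1)\overline{\theta} = d(1-\overline{\theta}) + \overline{\theta}$. Since $0<\overline{\theta}<1$ and $d \geq 1$, both summands are strictly positive, so the denominator is positive; together with $\overline{\theta}>0$ in the numerator this gives $\theta(d)>0$. For the upper bound, $\theta(d)<1$ is equivalent, after multiplying through by the (positive) denominator, to $\overline{\theta} < d-(d-1)\overline{\theta}$, i.e. to $d\,\overline{\theta} < d$, which holds precisely because $\overline{\theta}<1$. Hence $0<\theta(d)<1$ for every $d\in\mathbb{N}\setminus\{0\}$.

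For the second property, I would compute
\begin{equation*}
\frac{1-\theta(d)}{\theta(d)} = \frac{1}{\theta(d)} - 1 = \frac{d-(d-1)\overline{\theta}}{\overline{\theta}} - 1 = \frac{d-(d-1)\overline{\theta}-\overline{\theta}}{\overline{\theta}} = \frac{d(1-\overline{\theta})}{\overline{\theta}},
\end{equation*}
so that $\tfrac{1-\theta(d)}{\theta(d)}$ equals $d$ times the constant $\tfrac{1-\overline{\theta}}{\overline{\theta}} = \tfrac{1-\theta(1)}{\theta(1)}$. Substituting $kd$ for $d$ then yields $\tfrac{1-\theta(kd)}{\theta(kd)} = kd\cdot\tfrac{1-\overline{\theta}}{\overline{\theta}} = k\cdot\tfrac{1-\theta(d)}{\theta(d)}$, which is exactly the claimed identity (and in particular shows that the definition \eqref{eq:thetad} was reverse-engineered from the requirement \eqref{eq:base}). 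There is no genuine obstacle here: the only point needing a moment's care is that the denominator $d-(d-1)\overline{\theta}$ is nonzero, which is already guaranteed by the positivity established in the first part, so every division above is legitimate.
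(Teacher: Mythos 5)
Your proof is correct and follows essentially the same route as the paper: positivity of the denominator $d-(d-1)\overline{\theta}$ via the decomposition $d(1-\overline{\theta})+\overline{\theta}$, the upper bound by reducing $\theta(d)<1$ to $\overline{\theta}<1$, and property 2 by simplifying $\tfrac{1-\theta(d)}{\theta(d)}$ to $d\,\tfrac{1-\overline{\theta}}{\overline{\theta}}$. No gaps; nothing further is needed.
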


The definition of $\theta(d)$ in \eqref{eq:thetad} induces an adaptive scheduling for the parameter $\theta$, which is determined by the parameter $\theta(1) = \overline{\theta}$. The idea of introducing an adaptive value for the probability of retaining units in dropout training for neural networks has been explored by Ba and Frey \cite{AdaptiveDropout}, Rennie et al. \cite{AnnealedDropout} and Morerio et al. \cite{CurriculumDropout}. However, these prior works typically adjust the dropout rate based on the values of the output from a previous later \cite{AdaptiveDropout} or based on the number of backpropagation's epochs \cite{CurriculumDropout,AnnealedDropout}. Here, in contrast, we are selecting a different value for $\theta$ as a function of the size of the factorization we are searching, or, put in the terms of neural networks, the dropout rate is modulated based on the number of units in the network.

Given this proposed modification to the dropout rate, we define $\lambda_d$ as
\begin{equation}
	\lambda_d = \dfrac{1-\theta(d)}{\theta(d)}
\end{equation}
and now propose a modified version of \eqref{eq:P2} given by
\begin{equation}\label{eq:P3}
\min_{\U,\V,d} \left[ \| \mathbf{X} - \mathbf{U}  \mathbf{V}^\top \|_F^2 + \lambda_d \sum_{k = 1}^d \| \mathbf{u}_k \|_2^2 \| \mathbf{v}_k \|_2^2 \right].
\end{equation}

Taking advantage of $\lambda_d$ as in Proposition \ref{prop:adapt}, we can now correct the bias of \eqref{eq:omega} in promoting over-sized factorizations (see Section \ref{sez:sim}) by constructing a regularizer based on the value of $\lambda_d \Omega(\U,\V)$. In addition, we can guarantee strong formal properties of the regularizer which naturally induces a quasi-norm on $m \times n$ matrices.  In particular, we note the following result.

%\proof 
%\begin{enumerate}
%	\item We will prove $\theta(d) > 0$ and $\theta(d) < 1$ separately. Since $\overline{\theta} > 0$, then $\theta(d) > 0$ if and only if $m - (m - 1)\overline{\theta} > 0$. But this is true since
%	\begin{equation}
%	m - (m - 1)\overline{\theta} = m - m\overline{\theta} + \overline{\theta} \geq m(1 - \overline{\theta}) > 0.
%	\end{equation}
%	On the other hand, since the fraction $\theta(d)$ is positive, $\theta(d) < 1$ is verified if and only if
%	\begin{equation}
%	\overline{\theta} < m - (m -1)\overline{\theta}
%	\end{equation}
%	if and only if
%	\begin{equation}
%	0 < m - m \overline{\theta}
%	\end{equation}
%	if and only if 
%	\begin{equation}
%	\overline{\theta} < 1
%	\end{equation}
%	which is actually true by assumption.
%	\item In addition to be guaranteed by Proposition \ref{prop:1}, the property can also be verified analytically. Indeed,
%	\begin{equation}
%	\lambda_{d} = \dfrac{1 - \theta(d)}{\theta(d)} = \dfrac{1 - \dfrac{\overline{\theta}}{d - (d - 1)\overline{\theta}}}{\dfrac{\overline{\theta}}{d - (d - 1)\overline{\theta}}} = \dfrac{d - (d - 1)\overline{\theta} - \overline{\theta}}{\overline{\theta}}, 
%	\end{equation}
%	ore equivalently, by deleting $\overline{\theta}$ and highlighting $d$ in front of the fraction,
%	\begin{equation}
%	\lambda_{d} = d\dfrac{1 - \overline{\theta}}{\overline{\theta}} = d \lambda_{1}. 
%	\end{equation}
%	It is therefore evident that $\lambda_{2d} = 2 d \lambda_{1} = 2 \lambda_{d}.$
%\end{enumerate}\endproof
\begin{prop}
	With the previous notation, for any $m \times n$ matrix $\mathbf{Y}$, let
	\begin{equation}\label{eq:norm}
	%\| \mathbf{X} \|_{\hspace{-.5 mm} \vartriangle} = \min_{\scriptsize \ccc} \sqrt{ \lambda_{d} \sum_{k = 1}^d \| \mathbf{u}_k \|_2^2 \| \mathbf{v}_k \|_2^2}.
	\| \mathbf{Y} \|_{\hspace{-.5 mm} \vartriangle} = \min_{d,\U,\V} \sqrt{ \lambda_{d} \sum_{k = 1}^d \| \mathbf{u}_k \|_2^2 \| \mathbf{v}_k \|_2^2} \quad \text{s.t.} \quad 
	d \geq \rho(\Y), \U \in \mathbb{R}^{m\times d}, \V \in \mathbb{R}^{n\times d} ~ \text{and}~ \U\V^\top = \Y.
	\end{equation}
	Then, \eqref{eq:norm} defines a quasi-norm over $m \times n$ matrices, i.e., $\| \Y \|_{\vartriangle}$ satisfies:
	\begin{align}
	& \| \Y \|_{\vartriangle} \geq 0 \qquad \mbox{for every} \; \Y \in \mathbb{R}^{m \times n} \label{eq:1} \\
	& \| \Y \|_{\vartriangle} = 0  \iff \Y = \0 \label{eq:2} \\
	& \| \alpha \Y \|_{\vartriangle} = |\alpha|\|  \Y \|_{\vartriangle} \qquad \mbox{for every} \; \alpha \in \mathbb{R} \; \mbox{and} \; \Y \in \mathbb{R}^{m \times n} \label{eq:3} \\
	& \exists \mbox{ $C >0$ (in particular $C=\sqrt{2}$) such that} \; \| \mathbf{Y} + \mathbf{Z} \|_{\vartriangle} \leq C (\|  \mathbf{Y} \|_{\vartriangle} + \| \mathbf{Z} \|_{{\hspace{-.5 mm} \vartriangle}}) \; \forall (\mathbf{Y}, \mathbf{Z}) %\mbox{for every} \; \mathbf{Y},\mathbf{Z} \in \mathbb{R}^{m \times n} 
	\label{eq:4}
	\end{align}
\end{prop}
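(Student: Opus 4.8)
The plan is to check the four axioms \eqref{eq:1}--\eqref{eq:4} directly from the definition \eqref{eq:norm}, using the one structural fact behind the adaptive rate: Proposition~\ref{prop:adapt}(2) with $d=1$ gives $\lambda_d = d\,\lambda_1$, so the quantity minimized in \eqref{eq:norm} is $\lambda_1\, d\, \Omega(\mathbf{U},\mathbf{V})$. Since $\lambda_1 = \tfrac{1-\overline\theta}{\overline\theta}>0$ and $\Omega\ge 0$, axiom \eqref{eq:1} is immediate; and $\|\mathbf{0}\|_\vartriangle = 0$ follows from the feasible factorization $d=1$, $\mathbf{u}_1=\mathbf{0}\in\mathbb{R}^m$, $\mathbf{v}_1=\mathbf{0}\in\mathbb{R}^n$, which has objective value $0$. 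For homogeneity \eqref{eq:3} with $\alpha\neq 0$, I would use the bijection $(\mathbf{U},\mathbf{V})\mapsto(\sqrt{|\alpha|}\,\mathbf{U},\ \mathrm{sgn}(\alpha)\sqrt{|\alpha|}\,\mathbf{V})$ between the feasible sets of \eqref{eq:norm} for $\mathbf{Y}$ and for $\alpha\mathbf{Y}$: it sends $\mathbf{U}\mathbf{V}^\top=\mathbf{Y}$ to $(\sqrt{|\alpha|}\mathbf{U})(\mathrm{sgn}(\alpha)\sqrt{|\alpha|}\mathbf{V})^\top=\alpha\mathbf{Y}$, keeps the number of columns $d$ fixed, and preserves the constraint $d\ge\rho(\cdot)$ because $\rho(\alpha\mathbf{Y})=\rho(\mathbf{Y})$; under it $\Omega$ is multiplied by $\alpha^2$ while $\lambda_d$ is unchanged, so the minimum is multiplied by $\alpha^2$ and $\|\alpha\mathbf{Y}\|_\vartriangle=|\alpha|\,\|\mathbf{Y}\|_\vartriangle$. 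The case $\alpha=0$ reduces to $\|\mathbf{0}\|_\vartriangle=0$, already shown.

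For the forward implication of \eqref{eq:2} I would establish the lower bound $\|\mathbf{Y}\|_\vartriangle\ge\sqrt{\lambda_1}\,\|\mathbf{Y}\|_\star$. Indeed, for any feasible $(\mathbf{U},\mathbf{V})$ with $d$ columns, Cauchy--Schwarz gives $\big(\sum_{k=1}^{d}\|\mathbf{u}_k\|_2\|\mathbf{v}_k\|_2\big)^2\le d\sum_{k=1}^{d}\|\mathbf{u}_k\|_2^2\|\mathbf{v}_k\|_2^2$, so
\begin{equation*}
\lambda_d\sum_{k=1}^{d}\|\mathbf{u}_k\|_2^2\|\mathbf{v}_k\|_2^2=\lambda_1\,d\sum_{k=1}^{d}\|\mathbf{u}_k\|_2^2\|\mathbf{v}_k\|_2^2\ \ge\ \lambda_1\Big(\sum_{k=1}^{d}\|\mathbf{u}_k\|_2\|\mathbf{v}_k\|_2\Big)^2\ \ge\ \lambda_1\,\|\mathbf{Y}\|_\star^2 ,
\end{equation*}
where the last step uses the variational form of the nuclear norm \eqref{eq:Nstar}. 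Minimizing over feasible factorizations yields $\|\mathbf{Y}\|_\vartriangle\ge\sqrt{\lambda_1}\,\|\mathbf{Y}\|_\star$, hence $\|\mathbf{Y}\|_\vartriangle=0$ forces $\|\mathbf{Y}\|_\star=0$, i.e.\ $\mathbf{Y}=\mathbf{0}$; combined with $\|\mathbf{0}\|_\vartriangle=0$ this gives \eqref{eq:2}.

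The substantive part is the quasi-triangle inequality \eqref{eq:4}, and the idea is that the \emph{defect} of $\Omega$ from Proposition~\ref{prop:zero} is now an asset: duplicating the columns of a factorization $\ell$ times and rescaling them by $1/\sqrt{\ell}$ keeps the product unchanged, multiplies $\Omega$ by $1/\ell$, multiplies $d$ by $\ell$, and therefore leaves $\lambda_d\Omega=\lambda_1 d\,\Omega$ exactly invariant (this is precisely where the form \eqref{eq:thetad} is used). So, given factorizations $(\mathbf{U}_Y,\mathbf{V}_Y)$ of $\mathbf{Y}$ with $d_Y$ columns and $(\mathbf{U}_Z,\mathbf{V}_Z)$ of $\mathbf{Z}$ with $d_Z$ columns that attain (or $\varepsilon$-attain) the minima in \eqref{eq:norm}, I would first duplicate the first $d_Z$ times and the second $d_Y$ times (with the $1/\sqrt{\cdot}$ rescalings), so that both now have $d:=d_Yd_Z$ columns and unchanged objective values $\|\mathbf{Y}\|_\vartriangle^2$ and $\|\mathbf{Z}\|_\vartriangle^2$. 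Concatenating columnwise produces $(\mathbf{U},\mathbf{V})=\big([\mathbf{U}_Y,\mathbf{U}_Z],[\mathbf{V}_Y,\mathbf{V}_Z]\big)$ with $2d$ columns, $\mathbf{U}\mathbf{V}^\top=\mathbf{Y}+\mathbf{Z}$ (and $2d\ge\rho(\mathbf{Y}+\mathbf{Z})$ automatically, since $\mathbf{U}$ has $2d$ columns); by additivity of $\Omega$ over the two blocks,
\begin{equation*}
\|\mathbf{Y}+\mathbf{Z}\|_\vartriangle^2\ \le\ \lambda_{2d}\,\Omega(\mathbf{U},\mathbf{V})\ =\ \lambda_1(2d)\big(\Omega(\mathbf{U}_Y,\mathbf{V}_Y)+\Omega(\mathbf{U}_Z,\mathbf{V}_Z)\big)\ =\ 2\big(\|\mathbf{Y}\|_\vartriangle^2+\|\mathbf{Z}\|_\vartriangle^2\big),
\end{equation*}
and $2(a^2+b^2)\le 2(a+b)^2$ then gives $\|\mathbf{Y}+\mathbf{Z}\|_\vartriangle\le\sqrt{2}\,(\|\mathbf{Y}\|_\vartriangle+\|\mathbf{Z}\|_\vartriangle)$, i.e.\ \eqref{eq:4} with $C=\sqrt{2}$.

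I expect the only real obstacle to be the column-count equalization step: one must check carefully that the duplicate-and-rescale operation keeps $\mathbf{U}_Y\mathbf{V}_Y^\top$ unchanged while leaving $\lambda_d\Omega$ \emph{exactly} invariant (the naive alternative, zero-padding, fails, since it increases $d$ without decreasing $\Omega$), and one must handle the bookkeeping around the degenerate cases $\mathbf{Y}=\mathbf{0}$ or $\mathbf{Z}=\mathbf{0}$ (where \eqref{eq:4} is trivial) and around whether the infimum in \eqref{eq:norm} is attained --- if not, one runs the whole argument with $\varepsilon$-minimizers and lets $\varepsilon\to0$ at the end.
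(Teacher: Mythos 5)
Your proof is correct, and its overall skeleton (non-negativity from the definition, a rescaling bijection for absolute homogeneity, and column concatenation together with $\lambda_{2d}=2\lambda_d$ to get the quasi-triangle inequality with $C=\sqrt{2}$) is the same as the paper's. Two steps genuinely differ. For $\|\Y\|_\vartriangle=0\Rightarrow\Y=\0$ the paper argues directly on an optimal factorization: every term $\|\u_k\|_2^2\|\v_k\|_2^2$ must vanish, so for each $k$ either $\u_k=\0$ or $\v_k=\0$, hence every entry of $\Y=\sum_k\u_k\v_k^\top$ is zero; you instead prove the quantitative lower bound $\|\Y\|_\vartriangle\geq\sqrt{\lambda_1}\,\|\Y\|_\star$ via Cauchy--Schwarz and the variational form \eqref{eq:Nstar}, which is equally valid and has the side benefit of exhibiting $\|\cdot\|_\vartriangle$ as dominating the nuclear norm. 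More substantively, in \eqref{eq:4} the paper equalizes the two factorization sizes by fiat, justifying ``we can assume $d_{\mathbf{Y}}=d_{\mathbf{Z}}$'' only with the remark that $\lambda_{d_{\mathbf{Y}}+d_{\mathbf{Z}}}\leq\lambda_{2\max(d_{\mathbf{Y}},d_{\mathbf{Z}})}$; followed literally, that route gives terms like $\sqrt{2\max(d_{\mathbf{Y}},d_{\mathbf{Z}})/d_{\mathbf{Y}}}\;\|\Y\|_\vartriangle$, i.e.\ a constant depending on the ratio of the two sizes rather than a uniform $C=\sqrt2$. Your duplicate-and-rescale step (replace a factorization by $\ell$ copies of its columns scaled by $1/\sqrt{\ell}$, which preserves the product, sends $\Omega\mapsto\Omega/\ell$ and $d\mapsto\ell d$, hence leaves $\lambda_d\Omega$ exactly invariant) is precisely the device needed to make the equal-size reduction rigorous; it is the same mechanism the paper itself uses in Proposition \ref{prop:zero}, and you correctly observe that naive zero-padding would not work. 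Your remaining caveats (attainment of the minimum, degenerate cases $\Y=\0$ or $\mathbf{Z}=\0$) are handled implicitly in the paper in the same way, so nothing essential is missing from your argument.
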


Here we note that $\|\Y\|_\vartriangle^2$ is precisely the regularization induced in \eqref{eq:P3} by our variable choice of $\theta(d)$ in \eqref{eq:thetad}.  To further motivate the adaptive dropout rate, we also prove the following result which shows that even though the $\|\Y\|_\vartriangle$ function is not necessarily a convex function on $\Y$ (due to the fact that the triangle inequality is only shown for a constant $C>1$), the convex envelope of the induced regularization is equivalent to \emph{squared} nuclear norm regularization.

\begin{prop}
\label{prop:convex_env}
The convex envelope of $\tfrac{1}{2} \| \Y \|^2_\vartriangle$ is $\tfrac{1-\bar{\theta}}{2 \bar{\theta}} \| \Y\|^2_\star$.
\end{prop}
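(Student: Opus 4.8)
The plan is to compute the convex envelope of $f(\Y) := \tfrac12\|\Y\|_\vartriangle^2$ directly from its variational definition. First I would obtain a clean closed form for $\|\Y\|_\vartriangle$ itself. Starting from \eqref{eq:norm}, I would substitute $\lambda_d = \tfrac{1-\theta(d)}{\theta(d)} = d\,\tfrac{1-\bar\theta}{\bar\theta}$ (this follows from Proposition \ref{prop:adapt} with $k=d$, since $\lambda_1 = \tfrac{1-\bar\theta}{\bar\theta}$), so that
\begin{equation*}
\|\Y\|_\vartriangle^2 = \frac{1-\bar\theta}{\bar\theta}\,\inf_{\U\V^\top=\Y}\; d\sum_{k=1}^d \|\u_k\|_2^2\|\v_k\|_2^2 .
\end{equation*}
Now I would argue that the inner infimum equals $\bigl(\sum_k \|\u_k\|_2\|\v_k\|_2\bigr)^2$ minimized over factorizations, which by \eqref{eq:Nstar} is $\|\Y\|_\star^2$. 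Indeed, for any factorization, $d\sum_k \|\u_k\|_2^2\|\v_k\|_2^2 \geq \bigl(\sum_k \|\u_k\|_2\|\v_k\|_2\bigr)^2$ by Cauchy--Schwarz (applied to the vectors $(\|\u_k\|_2\|\v_k\|_2)_k$ and $(1,\dots,1)$), with equality iff all products $\|\u_k\|_2\|\v_k\|_2$ are equal across $k$; and given any factorization achieving $\sum_k\|\u_k\|_2\|\v_k\|_2$ close to $\|\Y\|_\star$, one can rescale/split columns (as in the construction before Proposition \ref{prop:zero}) to equalize the column-product values without changing $\U\V^\top$ or $\sum_k\|\u_k\|_2\|\v_k\|_2$, driving $d\sum_k\|\u_k\|_2^2\|\v_k\|_2^2$ down to $(\sum_k\|\u_k\|_2\|\v_k\|_2)^2$. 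Hence $\|\Y\|_\vartriangle^2 = \tfrac{1-\bar\theta}{\bar\theta}\|\Y\|_\star^2$, i.e. $\|\Y\|_\vartriangle = \sqrt{\tfrac{1-\bar\theta}{\bar\theta}}\,\|\Y\|_\star$, and therefore $f(\Y) = \tfrac{1-\bar\theta}{2\bar\theta}\|\Y\|_\star^2$.

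At this point the claim would be essentially immediate: $f$ is already a convex function of $\Y$, since $\|\cdot\|_\star$ is a norm and $t\mapsto t^2$ is convex and nondecreasing on $[0,\infty)$, so the composition $\tfrac{1-\bar\theta}{2\bar\theta}\|\Y\|_\star^2$ is convex. The convex envelope of a convex function is itself, which gives exactly $\tfrac{1-\bar\theta}{2\bar\theta}\|\Y\|_\star^2$ as stated. So Proposition \ref{prop:convex_env} reduces entirely to establishing the identity $\|\Y\|_\vartriangle^2 = \tfrac{1-\bar\theta}{\bar\theta}\|\Y\|_\star^2$.

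The main obstacle is the equality-case / achievability argument in the inner infimum: showing that $\inf_{\U\V^\top=\Y} d\sum_k\|\u_k\|_2^2\|\v_k\|_2^2 = \|\Y\|_\star^2$ rather than merely $\geq$. The Cauchy--Schwarz lower bound combined with the variational nuclear-norm formula \eqref{eq:Nstar} gives $\geq \|\Y\|_\star^2$ only if one is careful that increasing $d$ (which multiplies the sum by a larger prefactor) cannot help — this needs the observation that when the column-products are equalized to a common value $c$, one has $d\sum_k\|\u_k\|_2^2\|\v_k\|_2^2 = d\cdot d c^2 = (dc)^2 = (\sum_k\|\u_k\|_2\|\v_k\|_2)^2$, so the prefactor $d$ is exactly absorbed and the quantity depends only on $\sum_k\|\u_k\|_2\|\v_k\|_2$, which is then minimized by \eqref{eq:Nstar}. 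For the matching upper bound I would take a near-optimal nuclear-norm factorization $\U\V^\top=\Y$ (e.g. from the SVD, $\U = \hat U\Sigma^{1/2}$, $\V=\hat V\Sigma^{1/2}$, where all column products equal the corresponding singular value — not yet equal to each other), then split each column $k$ into $N_k$ identical copies scaled by $1/\sqrt{N_k}$ and choose the $N_k$ to make all resulting column-products (nearly) equal; in the limit this yields a sequence of factorizations along which $d\sum_k\|\u_k\|_2^2\|\v_k\|_2^2 \to \|\Y\|_\star^2$. Handling the $\Y=\0$ case (where $\rho(\Y)=0$, the constraint set and both sides collapse to $0$) and confirming the infimum is attained (so "$\min$" in \eqref{eq:norm} is legitimate, e.g. via the SVD-based factorization with equalized columns when the singular values are rational multiples of each other, or by a compactness/closedness argument) are the remaining technical points.
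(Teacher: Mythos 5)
Your proposal is correct, but it takes a genuinely different route from the paper, and in fact proves something stronger. The paper never identifies $\|\Y\|_\vartriangle$ in closed form: it computes the convex envelope by biconjugation, rewriting $\tfrac12\|\Y\|_\vartriangle^2$ with unit-norm columns and a scale vector $\Lambda$, evaluating the Fenchel dual $\Theta^*(\mathbf{Q})=\tfrac{\bar\theta}{1-\bar\theta}\tfrac{\sigma^2(\mathbf{Q})}{2}$ (the optimal columns all align with the top singular vector pair of $\mathbf{Q}$, and $d/\lambda_d$ is constant), and then dualizing again via spectral/nuclear norm duality; this deliberately sidesteps the question of whether $\|\cdot\|_\vartriangle$ is convex, which the paper explicitly leaves open. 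You instead establish the exact identity $\|\Y\|_\vartriangle^2=\tfrac{1-\bar\theta}{\bar\theta}\|\Y\|_\star^2$: the Cauchy--Schwarz bound $d\sum_k\|\u_k\|_2^2\|\v_k\|_2^2\geq\bigl(\sum_k\|\u_k\|_2\|\v_k\|_2\bigr)^2\geq\|\Y\|_\star^2$ gives one direction, and column splitting of the balanced SVD factorization with multiplicities $N_k\propto\sigma_k$ gives the matching upper bound, after which the envelope statement is immediate because a (scaled) squared norm is its own convex envelope. Your argument is more elementary (no conjugacy computations) and yields a sharper conclusion — the induced regularizer is not merely enveloped by, but equal to, $\tfrac{1-\bar\theta}{\bar\theta}\|\Y\|_\star^2$, so $\|\cdot\|_\vartriangle$ is actually a norm, improving the paper's quasi-norm constant $C=\sqrt2$ to $C=1$; the price is the extra care you already flag, namely that the equalization/splitting step only realizes the value as an infimum (exact attainment needs, e.g., rational ratios or a rotation of the SVD factors making all column products equal), which is harmless here since the envelope only depends on the infimal value. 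The two routes are mutually consistent: your identity says $\Theta$ already equals the convex function whose biconjugate the paper computes.
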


This result suggests that the regularization induced by our adaptive dropout rate scheme acts as a regularization on the rank of the factorization and is likely a tighter bound on the matrix rank than the fully convex relaxation to the nuclear norm. Notice also that the convex envelope is given by the square of the nuclear norm, as intuitively expected since the definition of $\Omega$ has the square of the norms of the columns of $\U$ and $\V$. Interestingly, the matrix approximation with squared nuclear norm regularization is not used in typical formulations, and it admits a closed form solution, as stated in the next proposition.

\begin{prop}
\label{prop:nuc_squared}
Let $\X = \mathbf{L} \boldsymbol{\Sigma} \mathbf{R}^\top$ be the singular valued decomposition of $\X$. The optimal solution to
\begin{equation}
\label{prop:num_squared}
\min_{\Y} \quad  \| \X - \Y \|_F^2 + \lambda \| \Y \|_\star^2,
\end{equation}
is given by $\Y = \mathbf{L} \S_{\mu}(\boldsymbol{\Sigma}) \mathbf{R}^\top$, where $\lambda >0$, $\mu = \frac{\lambda d}{1 + \lambda d} \bar\sigma_d(\X)$, $\bar\sigma_d(\X)$ is the average of the top $d$ singular values of $\X$, $d$ denotes the largest integer such that $\sigma_d(\X) > \frac{\lambda d}{1+\lambda d} \bar\sigma_d(\X)$, and $\S_{\mu}$ defines the shrinkage thresholding operator \cite{Vidal:book} applied to the singular values of $\X$.
\end{prop}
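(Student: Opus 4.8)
\emph{Overall plan.} The plan is to reduce the matrix problem \eqref{prop:num_squared} to a scalar problem over the singular values of $\Y$, solve that (convex) problem through its first-order optimality conditions, and then pin down the threshold $\mu$ and the integer $d$. The first reduction rests on von Neumann's trace inequality; the genuinely delicate part is the last step, where one must check that the terse description ``largest $d$ with $\sigma_d(\X) > \tfrac{\lambda d}{1+\lambda d}\bar\sigma_d(\X)$'' is exactly the stopping index produced by the optimality conditions.

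\emph{Reduction to singular values.} Write $\Y = \mathbf{P}\,\diag{s}\,\mathbf{Q}^\top$ for an SVD of $\Y$ with $s_1 \geq s_2 \geq \cdots \geq 0$. Since $\|\Y\|_F^2 = \sum_i s_i^2$ and $\|\Y\|_\star = \sum_i s_i$ depend only on $s$, and since von Neumann's trace inequality gives $\langle \X, \Y \rangle \leq \sum_i \sigma_i(\X)\, s_i$ with equality when $\Y$ is built from the singular vectors of $\X$, we get
\begin{equation}
\| \X - \Y \|_F^2 + \lambda \| \Y \|_\star^2 \;\geq\; \sum_i \big(\sigma_i(\X) - s_i\big)^2 + \lambda\Big(\textstyle\sum_i s_i\Big)^2 ,
\end{equation}
with equality attained by $\Y = \mathbf{L}\,\diag{s}\,\mathbf{R}^\top$. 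Hence the problem is equivalent to minimizing $h(s) = \sum_i (\sigma_i(\X) - s_i)^2 + \lambda(\sum_i s_i)^2$ over $s \geq \0$, and the optimal $\Y$ shares the singular vectors of $\X$; since $h$ is strictly convex, the minimizer $s^\star$ is unique (and so is $\Y$, by strict convexity of $\|\X-\Y\|_F^2$).

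\emph{Optimality conditions.} Introduce $t = \sum_i s_i$ and $\mu = \lambda t$. The KKT conditions for $\min_{s\ge\0} h(s)$ read $s_i^\star = \sigma_i(\X)-\mu$ when this is positive and $s_i^\star = 0$ otherwise, i.e.\ $s_i^\star = (\sigma_i(\X)-\mu)_+ = [\S_\mu(\boldsymbol{\Sigma})]_{ii}$; this vector is automatically in decreasing order, so the ordering constraint is inactive. If exactly the top $d$ singular values survive, summing gives $t = \sum_{k=1}^d \sigma_k(\X) - d\mu$, and combining with $\mu = \lambda t$ yields $\mu = \tfrac{\lambda d}{1+\lambda d}\bar\sigma_d(\X)$, as claimed. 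Conversely, $(\sigma_i(\X)-\mu)_+$ with this $\mu$ actually satisfies \emph{all} the KKT conditions—hence is the unique minimizer—precisely when $\sigma_d(\X) > \mu$ (the $d$-th entry is positive) and $\sigma_{d+1}(\X) \leq \mu$ (the $(d+1)$-st entry vanishes); since the minimizer is unique, at most one $d$ can satisfy this pair of inequalities.

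\emph{Identifying $d$; the main obstacle.} It remains to show that this $d$ is the largest integer $d^\star$ with $\sigma_{d^\star}(\X) > \mu(d^\star) := \tfrac{\lambda d^\star}{1+\lambda d^\star}\bar\sigma_{d^\star}(\X)$. Writing $S_d = \sum_{k=1}^d \sigma_k(\X)$ and clearing denominators, one gets two elementary equivalences: $\sigma_d(\X) > \mu(d) \iff \sigma_d(\X)(1+\lambda(d-1)) > \lambda S_{d-1}$ (which, as $\sigma_{d-1}(\X)\ge\sigma_d(\X)$, shows the set of $d$'s meeting the condition is a prefix $\{1,\dots,d^\star\}$), and, the key identity, $\sigma_{d+1}(\X) \leq \mu(d) \iff \sigma_{d+1}(\X)(1+\lambda d) \leq \lambda S_d \iff \sigma_{d+1}(\X) \leq \mu(d+1)$. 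Applying the last equivalence with $d = d^\star$ and using that $d^\star+1$ fails the defining condition (so $\sigma_{d^\star+1}(\X) \leq \mu(d^\star+1)$, equivalently $\sigma_{d^\star+1}(\X) \leq \mu(d^\star)$), we conclude that $d = d^\star$ satisfies both KKT-feasibility inequalities, hence is \emph{the} optimal number of retained singular values. Substituting $\mu = \mu(d^\star)$ back gives $s^\star_i = (\sigma_i(\X)-\mu)_+ = [\S_\mu(\boldsymbol{\Sigma})]_{ii}$, i.e.\ $\Y = \mathbf{L}\,\S_\mu(\boldsymbol{\Sigma})\,\mathbf{R}^\top$. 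The crux of the whole argument is spotting the easily-missed equivalence $\sigma_{d+1}(\X) \leq \mu(d) \iff \sigma_{d+1}(\X) \leq \mu(d+1)$, which is what forces the fixed-point stopping index to coincide with $d^\star$; everything else is bookkeeping.
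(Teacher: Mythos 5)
Your proof is correct and follows essentially the same route as the paper's: reduce \eqref{prop:num_squared} to the vector problem $\min_{s\geq\0}\sum_i(\sigma_i(\X)-s_i)^2+\lambda(\sum_i s_i)^2$, read off $s_i=(\sigma_i(\X)-\mu)_+$ with $\mu=\lambda\sum_i s_i$ from the first-order conditions, solve for $\mu=\tfrac{\lambda d}{1+\lambda d}\bar\sigma_d(\X)$, and certify the maximal $d$ via the same algebraic equivalence $\sigma_{d+1}\leq\mu(d)\iff\sigma_{d+1}\leq\mu(d+1)$ that appears as the paper's closing chain of inequalities. Your von Neumann argument is simply a more explicit justification of the paper's terse ``rotational invariance'' reduction, so the two proofs are substantively identical.
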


In conclusion, despite the regularizer $\Omega(\U,\V)$ paired with a fixed value of $\theta$ can not be directly linked with $\| \X \|_\star$ due to Proposition \ref{prop:zero}, Proposition \ref{prop:nuc_squared} prospects an unexpected connection between the optimization problem \eqref{eq:P2} and the squared nuclear norm regularization when an adaptive choice for $\theta = \theta(d)$ is adopted. Such finding will be corroborated by numerical evidences in the next Section. 

\section{Numerical Simulations}\label{sez:sim}

To demonstrate our predictions experimentally, we first verify the equivalence between the stochastic \eqref{eq:P1} and deterministic \eqref{eq:P2} formulations of matrix factorization dropout by %Consistent with earlier theoretical arguments, we also demonstrate that as $d$ increases, matrix factorization dropout loses its regularizing effect.
constructing a synthetic data matrix $\mathbf{X}$, where $m = n = 100$, defined as the matrix product $\mathbf{X} = \mathbf{U}_0 {\mathbf{V}_0}^\top$ where $\mathbf{U}_0, \mathbf{V}_0 \in \mathbb{R}^{100 \times d}$ with $d=10,40,160$. The entries of $\mathbf{U}_0$ and $\mathbf{V}_0$ were sampled from a zero-mean Gaussian distribution with standard deviation 0.1. Both the stochastic \eqref{eq:P1} and deterministic \eqref{eq:P2} formulations of dropout were solved by 10,000 iterations of gradient descent with diminishing $O(\frac{1}{t})$ lengths for the step size. In the stochastic setting, we approximate the objective in \eqref{eq:P1} and the gradient by sampling a new Bernoulli vector $\mathbf{r}$ for every iteration of the optimization as in \cite{DropoutCORR,DropoutJMLR}.

\begin{figure}
	\includegraphics[width=\textwidth,keepaspectratio]{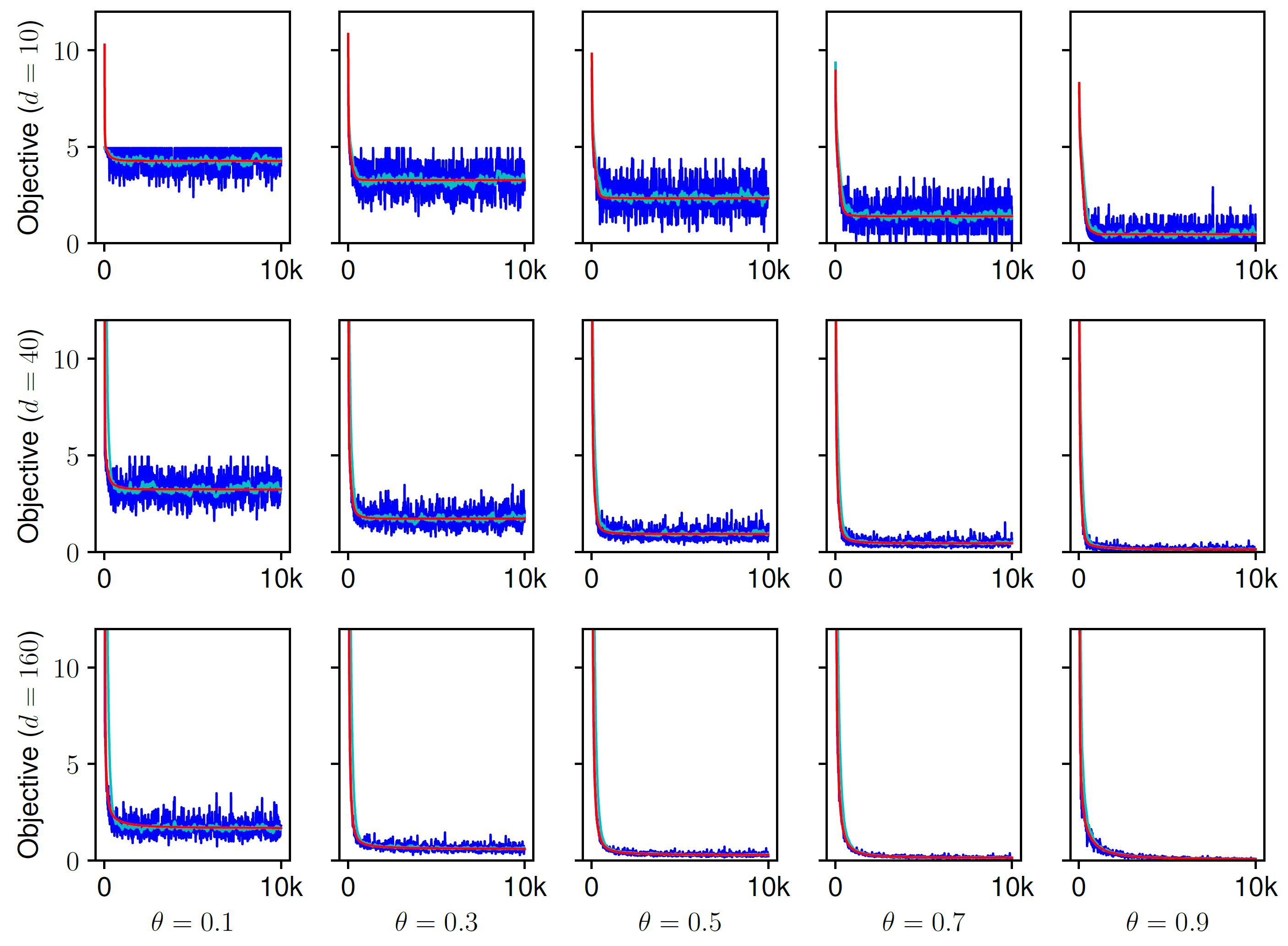}
  \caption{For $\theta \in \{0.1, 0.3, 0.5, 0.7, 0.9\}$ and $d \in \{10,40,160\}$ we compare the deterministic problem \eqref{eq:P2} (red) with its stochastic counterpart \eqref{eq:P1} (blue). The exponential moving average of the stochastic objective is shown in cyan. Best viewed in color.}
	\label{fig:obj_curves}
\end{figure}

Figure \ref{fig:obj_curves} plots the objective curves for the stochastic and deterministic dropout formulations for different choices of the dropout rate $\theta = 0.1, 0.3, 0.5, 0.7, 0.9$ and factorization size $d = 10, 40, 160$. We observe that across all choices of parameters $\theta$ and $d$, the deterministic objective \eqref{eq:P2} tracks the apparent expected value of its stochastic counterpart \eqref{eq:P1}. This provides experimental evidence for the fact that the two formulations are equivalent, as predicted.

Having verified the equivalence between \eqref{eq:P1} and \eqref{eq:P2}, we are now interested in supporting our theoretical analysis of the regularizer \eqref{eq:omega} through a numerical simulations.  Specifically, we investigate the rank-limiting effects of the three regularization schemes considered: matrix factorization dropout with a fixed value of $\theta$, adaptive dropout with a value of $\theta(d)$ that scales with the dimension of the factors, and the convex, nuclear-norm squared problem which is the convex envelope of the problem induced by our proposed adaptive dropout scheme. We hypothesize that the adaptive dropout scheme should promote low-rank factorizations, while unmodified dropout should not. Moreover, in view of Proposition \ref{prop:convex_env}, we evaluate whether adaptive dropout and the nuclear-norm squared formulation produce similar solutions.

\begin{figure}[h]
	\includegraphics[width=\columnwidth,keepaspectratio]{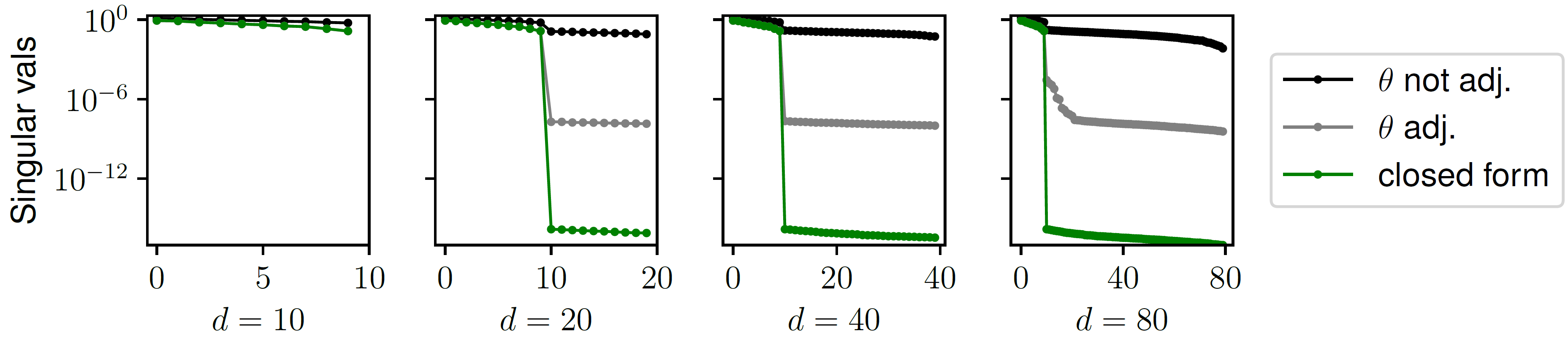}
  \caption{Singular values corresponding to the optimal solutions of the three regularization schemes considered: fixed dropout rate of $0.9$ (black), adaptive dropout $\theta = \theta(d)$ as \eqref{eq:thetad} with $\overline{\theta} = 0.9$ (gray), and the nuclear-norm squared closed-form optimization as in Proposition \ref{prop:nuc_squared} (green). The fixed dropout regularization has little effect as the size of the factorization $d$ increases. Adaptive dropout and nuclear-norm squared regularization both result in similar degrees of shrinkage-thresholding. Note that the singular values for the nuclear-norm squared case do not change with $d$. Best viewed in color.}
	\label{fig:rank_plots}
\end{figure}

We constructed a synthetic dataset $X$ consisting of a low-rank matrix combined with dense Gaussian noise. Specifically, we let $X = U_0 V_0^\top + Z_0$ where $U_0, V_0 \in \mathbb{R}^{100 \times 10}$ contain entries drawn from a normal distribution ($\mu=0$, $\sigma = 0.1$), as before. The entries of the noise matrix $Z_0$ were drawn from a normal distribution with $\sigma = 0.01$. We fixed the dropout parameter $\bar{\theta} = 0.9$ and solved the dropout optimization using gradient descent as described previously while using the closed form solution given by Proposition \ref{prop:nuc_squared} to solve the problem with nuclear-norm squared regularization.

Figure \ref{fig:rank_plots} plots the singular values for the optimal solution to each of the three problems. We observe first that without adjusting $\theta$, dropout regularization has little effect on the rank of the solution.  The smallest singular values are still relatively high and not modified significantly compared to the singular values of the original data. On the other hand, by adjusting the dropout rate based on the size of the factorization we observe that the method correctly recovers the rank of the noise-free data which also closely matches the predicted convex envelope with the nuclear-norm squared regularizer (note the log scale of the singular values). Furthermore, across the choices for $d$, the relative Frobenius distances between the solutions of these two methods are very small (between $10^{-6}$ and $10^{-2}$). Taken together, our theoretical predictions and experimental results suggest that adapting the dropout rate based on the size of the factorization is critical to ensuring the effectiveness of dropout as a regularizer and in limiting the degrees of freedom of the model.

% have a clear effect on the solution's singular values: all singular values after the tenth (the rank of the noise-free data) are thresholded to near zero (note the log scale).
%The observation that the two latter methods have a similar shrinkage-thresholding effect on the data matrix provides some evidence that the two formulations may in fact be equivalent.

\section{Conclusions}\label{sez:end}

Here we have presented a theoretical analysis of dropout as a potential regularization strategy in matrix factorization problems and shown that the stochastic dropout formulation induces a deterministic regularization on the matrix factors.  Additionally, we demonstrated that using dropout with a fixed dropout rate is not sufficient to limit the size of the factorization.  Instead, we proposed a dropout strategy that adjusts the dropout rate based on the size of the factorization which mediates this problem and results in an induced regularization that is closely related to the squared nuclear norm.  Finally, we presented experimental results that confirmed our theoretical predictions.  While we have focused primarily on matrix factorization in this paper, our analysis is easily extended to many forms of neural network training that employ dropout on a final, fully-connected layer, which we save for future work.

%In this paper, we extend the theoretical analysis of \cite{Wager:NIPS13,NIPS2014_5502,Baldi1,Baldi2,JMLR:v16:helmbold15a} dropout for matrix factorization, supporting some recent promising result scored by \cite{Zhai:CoRR15,He2016}. We embrace the novel theoretical viewpoint where the size of the factorization $d$ is dynamically varying and we derive the deterministic optimization problem equivalent induced by the dropout criterion. This allows us to discover a pathological behavior the induce regularizer which promotes over-sized factorization. Moreover, through our numerical simulation we show that the objective value can be lowered by simply doubling the size of the factorization, in a way that is totally independent from the quality of the fitting of the original data matrix. Besides, we propose an operative solution to circumvent this issue, adaptively changing the Bernoulli parameter $\theta$ as a function of $d$. This can be done very efficiently if relying on our proposed closed-form expression \eqref{eq:thetad} for it.

\bibliographystyle{plain}
\bibliography{dropout}

\appendix

\section*{Supplementary Material}
	
	\subsection*{Proofs from Section 3: Dropout Criterion and Matrix Factorization}
	
	For a fixed $m \times n$ matrix $\mathbf{X}$, consider the problem of factorizing $\mathbf{X}$ into the product $\mathbf{U} \mathbf{V}^\top$ where $\mathbf{U}$ is $m \times d$ and $\mathbf{V}$ is $n \times d$, for some $d \geq \rho(\mathbf{X}) := {\rm rank}(\mathbf{X})$. 
	
	\begin{prop}
		Define $\mathbf{r} = [r_1,\dots,r_d]$, whose elements are ${\rm Bernoulli}(\theta)$ i.i.d. where $0 < \theta < 1$. Furthermore, denote $\mathbf{u}_k \in \mathbb{R}^m$ and $\mathbf{v}_k \in \mathbb{R}^n$ the $k$-th column in $\mathbf{U}$ and $\mathbf{V}$, respectively, $k = 1,\dots,d$. Then, 
		\begin{equation}
		\mathbb{E}_{\mathbf{r}} \left\| \mathbf{X} - \dfrac{1}{\theta} \mathbf{U} {\rm diag}(\mathbf{r}) \mathbf{V}^\top \right\|_F^2 = \| \mathbf{X} - \mathbf{U}  \mathbf{V}^\top \|_F^2 + \dfrac{1 - \theta}{\theta} \sum_{k = 1}^d \| \mathbf{u}_k \|_2^2 \| \mathbf{v}_k \|_2^2. 
		\end{equation}
	\end{prop}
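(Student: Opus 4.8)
The plan is to reduce the identity to an elementary entrywise computation via the bias--variance decomposition of the squared Frobenius norm. For \emph{any} random matrix $A$ with entries $A_{ij}$ (no independence needed at this stage) one has
\begin{align*}
\E\|A\|_F^2 = \sum_{ij}\E[A_{ij}^2] = \sum_{ij}\big(\E[A_{ij}]\big)^2 + \sum_{ij}\Var(A_{ij}) = \|\E A\|_F^2 + \1^\top\Var(A)\1,
\end{align*}
where $\Var(A)$ denotes the matrix of entrywise variances. I would apply this with $A = \X - \tfrac{1}{\theta}\U\diag{\r}\V^\top$ and then evaluate the two resulting terms separately.

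First I would handle the mean term. Because $\X,\U,\V$ are deterministic and $\E[\diag{\r}] = \theta I$ by linearity of expectation (each $r_k$ has mean $\theta$), we obtain $\E A = \X - \tfrac{1}{\theta}\U(\theta I)\V^\top = \X - \U\V^\top$, so $\|\E A\|_F^2 = \|\X - \U\V^\top\|_F^2$, which is exactly the first term on the right-hand side of the claimed identity.

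Next I would compute the variance term. Since $\X$ is deterministic, $\Var(A_{ij}) = \tfrac{1}{\theta^2}\Var\big((\U\diag{\r}\V^\top)_{ij}\big)$, and writing $\U\diag{\r}\V^\top = \sum_{k=1}^d r_k\,\u_k\v_k^\top$ gives $(\U\diag{\r}\V^\top)_{ij} = \sum_k u_{ik}v_{jk}r_k$. Here I would invoke the independence of $r_1,\dots,r_d$ to split the variance of the sum into a sum of variances, $\Var\big(\sum_k u_{ik}v_{jk}r_k\big) = \sum_k u_{ik}^2 v_{jk}^2\,\Var(r_k)$, use $\Var(r_k) = \theta(1-\theta)$ for a $\Ber(\theta)$ variable, and sum over $i,j$:
\begin{align*}
\1^\top\Var(A)\1 = \frac{\theta(1-\theta)}{\theta^2}\sum_{k=1}^d\Big(\sum_i u_{ik}^2\Big)\Big(\sum_j v_{jk}^2\Big) = \frac{1-\theta}{\theta}\sum_{k=1}^d\|\u_k\|_2^2\|\v_k\|_2^2.
\end{align*}
Adding the mean and variance contributions yields the asserted equality.

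The argument is essentially routine, and I do not anticipate a genuine obstacle. The only points that require a bit of care are: (i) keeping track of \emph{where} independence is actually used --- the bias--variance split of $\E\|A\|_F^2$ needs none, whereas the decomposition of $\Var\big(\sum_k u_{ik}v_{jk}r_k\big)$ into per-column variances does; and (ii) the elementary bookkeeping that turns the triple sum $\sum_{ijk} u_{ik}^2 v_{jk}^2$ into $\sum_k \|\u_k\|_2^2\|\v_k\|_2^2$ by factoring the $i$- and $j$-sums. Edge cases such as $\theta = 0$ are excluded by hypothesis ($0<\theta<1$), so division by $\theta$ is harmless.
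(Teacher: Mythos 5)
Your proposal is correct and follows essentially the same route as the paper's own proof: the entrywise bias--variance decomposition $\E\|A\|_F^2 = \|\E A\|_F^2 + \1^\top\Var(A)\1$ applied to $A = \X - \tfrac{1}{\theta}\U\diag{\r}\V^\top$, with $\E[\diag \r] = \theta I$ for the mean term and independence of the $r_k$ plus $\Var(r_k)=\theta(1-\theta)$ for the variance term. Your remark that independence is only needed to split $\Var\bigl(\sum_k u_{ik}v_{jk}r_k\bigr)$, and not for the Frobenius-norm decomposition itself, is a small but accurate sharpening of the paper's phrasing.
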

	
	\proof
	
	Equivalently, we will demonstrate that
	\begin{equation}\nonumber
	\mathbb{E}_{\mathbf{r}} \|\theta \mathbf{X} - \mathbf{U} {\rm diag}(\mathbf{r}) \mathbf{V}^\top \|_F^2 = \theta^2 \| \mathbf{X} - \mathbf{U}  \mathbf{V}^\top \|_F^2 + \theta( 1 - \theta ) \sum_{k = 1}^d \| \mathbf{u}_k \|_2^2 \| \mathbf{v}_k \|_2^2. 
	\end{equation}
	Since 
	\begin{align}
	&\mathbb{E}_{\mathbf{r}} \|\theta \mathbf{X} - \mathbf{U} {\rm diag}(\mathbf{r}) \mathbf{V}^\top \|_F^2 = \nonumber \\ &= \mathbb{E}_{\mathbf{r}} \left\| \begin{bmatrix} \theta X_{11} - \sum_{k = 1}^d U_{1k}r_kV_{1k}, & \dots, & \theta X_{1n} - \sum_{k = 1}^d U_{1k}r_kV_{nk} \\
	\vdots & \ddots & \vdots \\
	\theta X_{m1} - \sum_{k = 1}^d U_{mk}r_kV_{1k}, & \dots, & \theta X_{mn} - \sum_{k = 1}^d U_{mk}r_kV_{nk}
	\end{bmatrix} \right \|_F^2,
	\end{align}
	by definition of Frobenius norm and linearity of $\mathbb{E}_{\mathbf{r}}$, we elicit
	\begin{equation}
	\mathbb{E}_{\mathbf{r}} \|\theta \mathbf{X} - \mathbf{U} {\rm diag}(\mathbf{r}) \mathbf{V}^\top \|_F^2 = \sum_{i = 1}^m \sum_{j = 1}^n \mathbb{E}_{\mathbf{r}} \left[ \left( \theta X_{ij} - \sum_{k = 1}^d U_{ik} r_k V_{jk} \right)^2 \right].
	\end{equation}
	Use the bias-variance decomposition $\mathbb{E}[r^2] = \mathbb{V}[r] + \mathbb{E}[r]^2$, holding for a scalar random variable $r$.
	\begin{align}
	\mathbb{E}_{\mathbf{r}} \|\theta \mathbf{X} - \mathbf{U} {\rm diag}(\mathbf{r}) \mathbf{V}^\top \|_F^2 &  = \sum_{i = 1}^m \sum_{j = 1}^n \mathbb{V}_{\mathbf{r}} \left[ \theta X_{ij} - \sum_{k = 1}^d U_{ik} r_k V_{jk} \right] + \nonumber \\
	& + \sum_{i = 1}^m \sum_{j = 1}^n \left( \mathbb{E}_{\mathbf{r}} \left[  \theta X_{ij} - \sum_{k = 1}^d U_{ik} r_k V_{jk} \right] \right)^2.
	\end{align}
	Since $r_1,\dots,r_d$ are i.i.d., use the properties of expectation $\E_{\mathbf{r}}$ and variance $\Var_{\mathbf{r}}$ with respect to linear combinations of independent random variables.
	\begin{align}
	\mathbb{E}_{\mathbf{r}} \|\theta \mathbf{X} - \mathbf{U} {\rm diag}(\mathbf{r}) \mathbf{V}^\top \|_F^2 &  = \sum_{i = 1}^m \sum_{j = 1}^n \sum_{k = 1}^d U_{ik}^2 V_{jk}^2 \mathbb{V}_{\mathbf{r}} \left[  r_k  \right] + \nonumber \\
	& + \sum_{i = 1}^m \sum_{j = 1}^n \left(   \theta X_{ij} - \sum_{k = 1}^d U_{ik} \mathbb{E}_{\mathbf{r}} \left[ r_k \right] V_{jk}  \right)^2.
	\end{align}
	Exploit the analytical formulas for expected value and variance of a Bernoulli$(\theta)$ distribution.
	\begin{align}
	\mathbb{E}_{\mathbf{r}} \|\theta \mathbf{X} - \mathbf{U} {\rm diag}(\mathbf{r}) \mathbf{V}^\top \|_F^2 &  = \sum_{i = 1}^m \sum_{j = 1}^n \sum_{k = 1}^d U_{ik}^2 V_{jk}^2 \cdot \theta (1 - \theta) + \nonumber \\
	& + \sum_{i = 1}^m \sum_{j = 1}^n \left(   \theta X_{ij} - \sum_{k = 1}^d U_{ik} \cdot \theta \cdot V_{jk}  \right)^2.
	\end{align}
	Rearrange the terms.
	\begin{align}
	\mathbb{E}_{\mathbf{r}} \|\theta \mathbf{X} - \mathbf{U} {\rm diag}(\mathbf{r}) \mathbf{V}^\top \|_F^2 &  = \theta (1 - \theta) \sum_{k = 1}^d \left( \sum_{i = 1}^m U_{ik}^2 \right)  \left( \sum_{j = 1}^n V_{jk}^2 \right) + \nonumber \\
	& + \theta^2 \sum_{i = 1}^m \sum_{j = 1}^n \left(    X_{ij} - \sum_{k = 1}^d U_{ik} V_{jk}  \right)^2.
	\end{align}
	Use the definition of row-by-column product of matrices
	\begin{align}
	\mathbb{E}_{\mathbf{r}} \|\theta \mathbf{X} - \mathbf{U} {\rm diag}(\mathbf{r}) \mathbf{V}^\top \|_F^2 &  = \theta (1 - \theta) \sum_{k = 1}^d \left( \sum_{i = 1}^m U_{ik}^2 \right)  \left( \sum_{j = 1}^n V_{jk}^2 \right) + \nonumber \\
	& + \theta^2 \sum_{i = 1}^m \sum_{j = 1}^n \left(    X_{ij} - \left[\mathbf{U} \mathbf{V}^\top \right]_{ij}  \right)^2.
	\end{align}
	Apply the definitions of squared Euclidean norm $\| \cdot \|_2^2$ and Frobenius norm $\| \cdot \|_F$
	\begin{align*}
	\mathbb{E}_{\mathbf{r}} \|\theta \mathbf{X} - \mathbf{U} {\rm diag}(\mathbf{r}) \mathbf{V}^\top \|_F^2 &  = \theta (1 - \theta) \sum_{k = 1}^d \| \mathbf{u}_{k} \|_2^2 \| \mathbf{v}_{k} \|_2^2 + \theta^2 \| \mathbf{X} - \mathbf{U} \mathbf{V}^\top \|_F.
	\end{align*}
	This concludes the proof.\endproof
	
	%We can assume $0 < \theta < 1$. Indeed, the cases $\theta = 0$ and $\theta = 1$ are not really interesting:
	%\begin{itemize}
	%	\item If $\theta = 0$, apart from the problem of dividing by zero, we are suppressing every column in the factorization, meaning $\mathbf{U} = \boldsymbol{0}$ and $\mathbf{V} = \boldsymbol{0}$. 
	%	\item When $\theta = 1$, the problem degenerates to the original unregularized case.
	%\end{itemize}
	
	\subsection*{Proofs from Section 4: Connections with Nuclear Norm Minimization}
	
	\begin{prop}\label{prop:zero}
		\begin{equation}\label{eq:zero}
		%	\min_{\scriptsize \ccc} \sum_{k = 1}^d \| \mathbf{u}_k \|_2^2 \| \mathbf{v}_k \|_2^2 = 0,
		0 = \inf_{d,\U,\V} \sum_{k = 1}^d \| \mathbf{u}_k \|_2^2 \| \mathbf{v}_k \|_2^2 
		\quad \text{s.t.} \quad 
		d \geq \rho(\X), \U \in \mathbb{R}^{m\times d}, \V \in \mathbb{R}^{n\times d} ~ \text{and}~ \U\V^\top = \X.
		\end{equation} 
	\end{prop}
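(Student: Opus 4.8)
The plan is to show that for every $\varepsilon > 0$ there exists a feasible triple $(d,\U,\V)$ --- i.e.\ with $d \geq \rho(\X)$, $\U \in \mathbb{R}^{m\times d}$, $\V \in \mathbb{R}^{n\times d}$, and $\U\V^\top = \X$ --- such that $\Omega(\U,\V) = \sum_{k=1}^d \|\u_k\|_2^2\|\v_k\|_2^2 < \varepsilon$. Since $\Omega(\U,\V) \geq 0$ for every pair of factors, producing such triples for arbitrarily small $\varepsilon$ immediately forces the infimum in \eqref{eq:zero} to equal $0$.

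First I would fix one concrete feasible starting point with finite regularizer value. Taking a compact singular value decomposition $\X = \mathbf{L}\boldsymbol{\Sigma}\mathbf{R}^\top$ with $\boldsymbol{\Sigma} \in \mathbb{R}^{\rho(\X)\times\rho(\X)}$, set $\overline{\U} = \mathbf{L}\boldsymbol{\Sigma}^{1/2}$ and $\overline{\V} = \mathbf{R}\boldsymbol{\Sigma}^{1/2}$. Then $\overline{\U}\,\overline{\V}^\top = \X$ and the number of columns is $\overline{d} = \rho(\X)$, so $(\overline{d},\overline{\U},\overline{\V})$ is feasible, and $M := \Omega(\overline{\U},\overline{\V})$ is a finite nonnegative constant.

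Next I would apply the column-doubling construction sketched in the main text. Given any feasible $(\overline{d},\overline{\U},\overline{\V})$ with $\overline{\U}\,\overline{\V}^\top = \X$, define $\mathbf{A} = \tfrac{\sqrt{2}}{2}[\,\overline{\U},\,\overline{\U}\,] \in \mathbb{R}^{m\times 2\overline{d}}$ and $\mathbf{B} = \tfrac{\sqrt{2}}{2}[\,\overline{\V},\,\overline{\V}\,] \in \mathbb{R}^{n\times 2\overline{d}}$. A one-line computation gives $\mathbf{A}\mathbf{B}^\top = \tfrac12\bigl(\overline{\U}\,\overline{\V}^\top + \overline{\U}\,\overline{\V}^\top\bigr) = \X$, so $(2\overline{d},\mathbf{A},\mathbf{B})$ is again feasible (the column count only increased, so $d \geq \rho(\X)$ still holds). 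Moreover every column of $\mathbf{A}$ is $\tfrac{\sqrt{2}}{2}$ times a column of $\overline{\U}$, each appearing twice, and likewise for $\mathbf{B}$ and $\overline{\V}$, so
\[
\Omega(\mathbf{A},\mathbf{B}) = 2\sum_{k=1}^{\overline{d}}\Bigl(\tfrac12\|\overline{\u}_k\|_2^2\Bigr)\Bigl(\tfrac12\|\overline{\v}_k\|_2^2\Bigr) = \tfrac12\,\Omega(\overline{\U},\overline{\V}).
\]

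Finally I would iterate: applying the doubling step $N$ times starting from $(\overline{\U},\overline{\V})$ produces a feasible triple with regularizer value $2^{-N}M$, and choosing $N$ with $2^{-N}M < \varepsilon$ completes the argument. I do not expect a genuine obstacle here --- the proof is essentially this one construction plus nonnegativity of $\Omega$. The only points that warrant a line of verification are that $\mathbf{A}\mathbf{B}^\top = \X$ holds \emph{exactly} (so the doubled factors remain feasible rather than merely approximate) and that the constraint $d \geq \rho(\X)$ is preserved under doubling; both are immediate from the construction.
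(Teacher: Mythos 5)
Your proposal is correct and uses the same key idea as the paper's proof: the column-doubling construction $\mathbf{A}=\tfrac{\sqrt 2}{2}[\overline{\U},\overline{\U}]$, $\mathbf{B}=\tfrac{\sqrt 2}{2}[\overline{\V},\overline{\V}]$, which preserves $\mathbf{A}\mathbf{B}^\top=\X$ while halving $\Omega$. Your packaging (iterating from an explicit feasible SVD-based factorization to get values $2^{-N}M$) is in fact slightly cleaner than the paper's contradiction argument, which implicitly assumes the infimum is attained, but it is essentially the same proof.
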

	
	\proof Let $\mathbf{U}$ and $\mathbf{V}$ such that $\mathbf{U} \mathbf{V}^\top = \mathbf{X}$ for a particular choice of $d$. Denote
	\begin{equation}
	\Omega(\mathbf{U},\mathbf{V}) = \sum_{k = 1}^d \| \mathbf{u}_k \|_2^2 \| \mathbf{v}_k \|_2^2
	\end{equation}
	and define
	\begin{align}
	\mathbf{A} & = \dfrac{\sqrt{2}}{2} [\mathbf{U},\mathbf{U}] \in \mathbb{R}^{m \times 2d} \\
	\mathbf{B} & = \dfrac{\sqrt{2}}{2} [\mathbf{V},\mathbf{V}] \in \mathbb{R}^{n \times 2d}.
	\end{align}
	Then
	\begin{equation}
	\mathbf{A} \mathbf{B}^\top = \left(\dfrac{\sqrt{2}}{2}\right)^2 \mathbf{U}\mathbf{V}^\top + \left(\dfrac{\sqrt{2}}{2}\right)^2 \mathbf{U}\mathbf{V}^\top = \dfrac{1}{2} \mathbf{X} + \dfrac{1}{2} \mathbf{X} = \mathbf{X} 
	\end{equation}
	and
	\begin{align}
	\Omega(\mathbf{A},\mathbf{B}) & = \sum_{k = 1}^{2d} \| \mathbf{a}_k \|_2^2 \| \mathbf{b}_k \|_2^2  \\ &= \dfrac{1}{4} \sum_{k = 1}^{d}  \| \mathbf{u}_k \|_2^2 \| \mathbf{v}_k \|_2^2 + \dfrac{1}{4} \sum_{k = 1}^{d}  \| \mathbf{u}_k \|_2^2 \| \mathbf{v}_k \|_2^2 = \dfrac{1}{2} \Omega(\mathbf{U},\mathbf{V}). 
	\end{align}
	
	In light of this observation, suppose, by absurd that $\varepsilon > 0$ is the minimum of \eqref{eq:zero}, being such value realizes for some matrix $\mathbf{U}$ and $\mathbf{V}$. Then, we can repeat the same construction and produce a pairs of matrix $\mathbf{A}$ and $\mathbf{B}$ such that $\Omega(\mathbf{A},\mathbf{B}) = \frac{\varepsilon}{2}$. Thus, necessarily, \eqref{eq:zero} holds being the objective non-negative. \endproof
	
	\subsection*{Proofs from Section 5: Matrix Dropout with Adaptive Dropout Rate}
	
	\begin{prop}\label{prop:adapt}
		%For any $0 < \bar \theta < 1$, define
		%\begin{equation}\label{eq:thetad}
		%\theta \colon \mathbb{N} \setminus \{ 0\} \rightarrow \mathbb{R}, \qquad \theta(d) = \dfrac{\overline{\theta}}{d - (d - 1)\overline{\theta}}.
		%\end{equation}
		
		For every $0 < \overline{\theta} < 1$, define \begin{equation}\label{eq:thetad}
		\theta(d) = \dfrac{\overline{\theta}}{d - (d - 1)\overline{\theta}}.
		\end{equation} Then, the following properties hold.
		\begin{enumerate}
			\item $0 < \theta(d) < 1$ for all $d \in \mathbb{N} \setminus \{ 0\}$. 
			\item $\dfrac{1 - \theta(kd)}{\theta(kd)} = k \dfrac{1-\theta(d)}{\theta(d)}$ for all $k \in \mathbb{N} \setminus \{ 0 \}$.
			%we obtain
			%	$\lambda_{2d} = 2 \lambda_{d}$ for any $d$ since 
			%	$\lambda_{d} = d \frac{1 - \overline{\theta}}{\overline{\theta}}$.
		\end{enumerate}
	\end{prop}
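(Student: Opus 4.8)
The plan is to verify both claims by direct algebraic manipulation, treating the quantity $\tfrac{1-\theta(d)}{\theta(d)}$ as the central object. The key observation that organizes everything is that the reciprocal of $\theta(d)$ is affine in $d$: from \eqref{eq:thetad} we have $\tfrac{1}{\theta(d)} = \tfrac{d - (d-1)\bar\theta}{\bar\theta} = \tfrac{d(1-\bar\theta) + \bar\theta}{\bar\theta}$. I would compute this simplification first, since it is what makes both parts fall out.

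For part 1, I would use the rewriting $d - (d-1)\bar\theta = d(1-\bar\theta) + \bar\theta$ of the denominator. Since $0 < \bar\theta < 1$ we have $1 - \bar\theta > 0$, so for every integer $d \geq 1$ the denominator satisfies $d(1-\bar\theta) + \bar\theta \geq (1-\bar\theta) + \bar\theta = 1 > 0$; combined with the positive numerator $\bar\theta$ this gives $\theta(d) > 0$. For the upper bound, $\theta(d) < 1$ is equivalent to $\bar\theta < d - (d-1)\bar\theta$, i.e. to $d\bar\theta < d$, which holds precisely because $\bar\theta < 1$. So both inequalities reduce to the standing hypothesis $0 < \bar\theta < 1$.

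For part 2, I would subtract $1$ from the affine expression for $1/\theta(d)$:
\begin{equation}
\frac{1-\theta(d)}{\theta(d)} = \frac{1}{\theta(d)} - 1 = \frac{d(1-\bar\theta)+\bar\theta}{\bar\theta} - 1 = \frac{d(1-\bar\theta)}{\bar\theta} = d\,\frac{1-\bar\theta}{\bar\theta}.
\end{equation}
Thus the weighting factor is exactly linear in $d$ with slope $\tfrac{1-\bar\theta}{\bar\theta}$. Replacing $d$ by $kd$ gives $\tfrac{1-\theta(kd)}{\theta(kd)} = kd\,\tfrac{1-\bar\theta}{\bar\theta} = k\big(d\,\tfrac{1-\bar\theta}{\bar\theta}\big) = k\,\tfrac{1-\theta(d)}{\theta(d)}$, which is the desired identity (note it in fact holds for all real $k$, not just integers).

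There is no genuine obstacle here: the proof is entirely elementary once the denominator is written as $d(1-\bar\theta)+\bar\theta$. The only point requiring mild care is the bookkeeping in the simplification of $\tfrac{1-\theta(d)}{\theta(d)}$, ensuring the $\bar\theta$ terms cancel correctly so that the dependence on $d$ is exactly proportional; this is precisely the property that \eqref{eq:thetad} was constructed to satisfy, so it should — and does — work out cleanly.
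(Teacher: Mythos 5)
Your proof is correct and follows essentially the same route as the paper: both establish part 1 by showing the denominator $d-(d-1)\overline{\theta}=d(1-\overline{\theta})+\overline{\theta}$ is positive and reducing $\theta(d)<1$ to $\overline{\theta}<1$, and both establish part 2 by simplifying $\tfrac{1-\theta(d)}{\theta(d)}$ to $d\,\tfrac{1-\overline{\theta}}{\overline{\theta}}$, from which the scaling identity is immediate. The only differences are cosmetic bookkeeping choices in the algebra.
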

	
	\proof
	
	\begin{enumerate}[leftmargin=*]
		\item We will prove $\theta(d) > 0$ and $\theta(d) < 1$ separately. Since $\overline{\theta} > 0$, then $\theta(d) > 0$ if and only if $m - (m - 1)\overline{\theta} > 0$. But this is true since
		\begin{equation}
		m - (m - 1)\overline{\theta} = m - m\overline{\theta} + \overline{\theta} \geq m(1 - \overline{\theta}) > 0.
		\end{equation}
		On the other hand, since the fraction $\theta(d)$ is positive, $\theta(d) < 1$ is verified if and only if
		\begin{equation}
		\overline{\theta} < m - (m -1)\overline{\theta}
		\end{equation}
		if and only if
		\begin{equation}
		0 < m - m \overline{\theta}
		\end{equation}
		if and only if 
		\begin{equation}
		\overline{\theta} < 1
		\end{equation}
		which is actually true by assumption.
		\item The property can also be verified analytically by noticing that
		\begin{equation}
		\dfrac{1 - \theta(d)}{\theta(d)} = \dfrac{1 - \dfrac{\overline{\theta}}{d - (d - 1)\overline{\theta}}}{\dfrac{\overline{\theta}}{d - (d - 1)\overline{\theta}}} = \dfrac{d - (d - 1)\overline{\theta} - \overline{\theta}}{\overline{\theta}} = d \dfrac{1 - \overline{\theta}}{\overline{\theta}}. 
		\end{equation}
	\end{enumerate}
	
	This concludes the proof \endproof
	\begin{prop}
		For any $m \times n$ matrix $\mathbf{X}$, consider the expression 
		\begin{equation}\label{eq:norm}
		%\| \mathbf{X} \|_{\hspace{-.5 mm} \vartriangle} = \min_{\scriptsize \ccc} \sqrt{ \lambda_{d} \sum_{k = 1}^d \| \mathbf{u}_k \|_2^2 \| \mathbf{v}_k \|_2^2}.
		\| \mathbf{X} \|_{\hspace{-.5 mm} \vartriangle} = \min_{d,\U,\V} \sqrt{ \lambda_{d} \sum_{k = 1}^d \| \mathbf{u}_k \|_2^2 \| \mathbf{v}_k \|_2^2} \quad \text{s.t.} \quad 
		d \geq \rho(\X), \U \in \mathbb{R}^{m\times d}, \V \in \mathbb{R}^{n\times d} ~ \text{and}~ \U\V^\top = \X.
		\end{equation}
		where $\lambda_d = d \frac{1 - \overline{\theta}}{\overline{\theta}}$, for any $0 < \overline{\theta} < 1$, $\mathbf{u}_k \in \mathbb{R}^m$ and $\mathbf{v}_k \in \mathbb{R}^n$ define the $k$-th column in $\mathbf{U}$ and $\mathbf{V}$, respectively, $k = 1,\dots,d.$ Then, equation \eqref{eq:norm} defines a quasi-norm over $m \times n$ matrices.
	\end{prop}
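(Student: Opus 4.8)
The plan is to verify the four quasi-norm axioms for $\|\mathbf{Y}\|_\vartriangle$ directly from the definition, where $\mathbf{Y}$ denotes a generic $m\times n$ matrix. Write $c=\tfrac{1-\overline{\theta}}{\overline{\theta}}>0$, so that $\lambda_d=c\,d$, and keep in mind the variational form of the nuclear norm \eqref{eq:Nstar}. Non-negativity is immediate: $\lambda_d>0$ and every summand $\|\mathbf{u}_k\|_2^2\|\mathbf{v}_k\|_2^2\ge 0$, so the quantity under the square root is non-negative and $\|\mathbf{Y}\|_\vartriangle$ is well defined and $\ge 0$. For absolute homogeneity, the case $\alpha=0$ follows from $\|\mathbf{0}\|_\vartriangle=0$ (take $d=1$ with both columns zero, which is feasible since $\rho(\mathbf{0})=0$); for $\alpha\neq 0$ I would observe that $(\mathbf{U},\mathbf{V})$ is feasible for $\mathbf{Y}$ with parameter $d$ if and only if $(\alpha\mathbf{U},\mathbf{V})$ is feasible for $\alpha\mathbf{Y}$ with the same $d$ (as $\rho(\alpha\mathbf{Y})=\rho(\mathbf{Y})$), and this reparametrization multiplies $\lambda_d\sum_k\|\mathbf{u}_k\|_2^2\|\mathbf{v}_k\|_2^2$ by exactly $\alpha^2$; minimizing over feasible factorizations then gives $\|\alpha\mathbf{Y}\|_\vartriangle=|\alpha|\,\|\mathbf{Y}\|_\vartriangle$.

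The crux is a two-sided comparison $\sqrt{c}\,\|\mathbf{Y}\|_\star\le\|\mathbf{Y}\|_\vartriangle\le\sqrt{2c}\,\|\mathbf{Y}\|_\star$, which simultaneously yields definiteness and the relaxed triangle inequality. For the lower bound, take any feasible $(d,\mathbf{U},\mathbf{V})$ and combine the Cauchy--Schwarz inequality $\big(\sum_{k=1}^d a_k\big)^2\le d\sum_{k=1}^d a_k^2$ (with $a_k=\|\mathbf{u}_k\|_2\|\mathbf{v}_k\|_2$) with \eqref{eq:Nstar}:
\[ \lambda_d\sum_{k=1}^d\|\mathbf{u}_k\|_2^2\|\mathbf{v}_k\|_2^2 \;=\; c\,d\sum_{k=1}^d\|\mathbf{u}_k\|_2^2\|\mathbf{v}_k\|_2^2 \;\ge\; c\Big(\sum_{k=1}^d\|\mathbf{u}_k\|_2\|\mathbf{v}_k\|_2\Big)^2 \;\ge\; c\,\|\mathbf{Y}\|_\star^2 . \]
Minimizing the left-hand side over feasible factorizations gives $\|\mathbf{Y}\|_\vartriangle^2\ge c\,\|\mathbf{Y}\|_\star^2$; in particular $\|\mathbf{Y}\|_\vartriangle=0$ forces $\|\mathbf{Y}\|_\star=0$, hence $\mathbf{Y}=\mathbf{0}$, while $\|\mathbf{0}\|_\vartriangle=0$ is clear, which settles definiteness.

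For the upper bound I would exhibit an explicit feasible factorization by \emph{balanced splitting} of the SVD. Let $\rho=\rho(\mathbf{Y})$, let $\sigma_1\ge\cdots\ge\sigma_\rho>0$ be the nonzero singular values with left/right singular vectors $\mathbf{a}_k,\mathbf{b}_k$ (if $\mathbf{Y}=\mathbf{0}$ the bound is trivial), set $p_k=\lceil\sigma_k/\sigma_\rho\rceil\ge 1$ and $d=\sum_{k=1}^\rho p_k$, and form $\mathbf{U},\mathbf{V}$ with $d$ columns by repeating, for each $k$, the pair $\big(\sqrt{\sigma_k/p_k}\,\mathbf{a}_k,\ \sqrt{\sigma_k/p_k}\,\mathbf{b}_k\big)$ exactly $p_k$ times. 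Then $\mathbf{U}\mathbf{V}^\top=\sum_k p_k\tfrac{\sigma_k}{p_k}\mathbf{a}_k\mathbf{b}_k^\top=\mathbf{Y}$ and $d\ge\rho$, so this is feasible; each column contributes $(\sigma_k/p_k)^2\le\sigma_\rho^2$, so $\sum_k\|\mathbf{u}_k\|_2^2\|\mathbf{v}_k\|_2^2=\sum_{k=1}^\rho\sigma_k^2/p_k\le\sigma_\rho\sum_k\sigma_k=\sigma_\rho\|\mathbf{Y}\|_\star$, while $d=\sum_k\lceil\sigma_k/\sigma_\rho\rceil\le\sum_k\sigma_k/\sigma_\rho+\rho\le 2\|\mathbf{Y}\|_\star/\sigma_\rho$ (using $\rho\sigma_\rho\le\sum_k\sigma_k$). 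Multiplying, $\lambda_d\sum_k\|\mathbf{u}_k\|_2^2\|\mathbf{v}_k\|_2^2=c\,d\,\sigma_\rho\|\mathbf{Y}\|_\star\le 2c\|\mathbf{Y}\|_\star^2$, hence $\|\mathbf{Y}\|_\vartriangle^2\le 2c\|\mathbf{Y}\|_\star^2$. With the sandwich established, the relaxed triangle inequality with $C=\sqrt{2}$ follows at once from the genuine subadditivity of $\|\cdot\|_\star$:
\[ \|\mathbf{Y}+\mathbf{Z}\|_\vartriangle \;\le\; \sqrt{2c}\,\|\mathbf{Y}+\mathbf{Z}\|_\star \;\le\; \sqrt{2c}\big(\|\mathbf{Y}\|_\star+\|\mathbf{Z}\|_\star\big) \;\le\; \sqrt{2}\big(\|\mathbf{Y}\|_\vartriangle+\|\mathbf{Z}\|_\vartriangle\big). \]

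The main obstacle is the upper-bound step: one must produce a feasible factorization whose weighted objective $\lambda_d\sum_k\|\mathbf{u}_k\|_2^2\|\mathbf{v}_k\|_2^2$ stays comparable to $\|\mathbf{Y}\|_\star^2$ even though the weight $\lambda_d=c\,d$ blows up with the number of columns. The balanced-splitting construction achieves this because taking $p_k\propto\sigma_k$ is exactly the equality case of the Cauchy--Schwarz step above, and the only loss comes from rounding $\sigma_k/\sigma_\rho$ up to an integer --- which is precisely why one gets the constant $\sqrt{2}$ rather than $1$ (a genuine norm would require the ratios $\sigma_k/\sigma_\rho$ to be integral). All remaining verifications are routine.
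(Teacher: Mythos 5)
Your proof is correct, but for the two non-trivial axioms it takes a genuinely different route from the paper. The paper verifies definiteness and the relaxed triangle inequality directly from the definition: for definiteness it takes a factorization realizing the minimum and uses the zero-product property columnwise to conclude $X_{ij}=\sum_k \overline{U}_{ik}\overline{V}_{jk}=0$; for the triangle inequality it concatenates the optimal factorizations of the two summands, $\boldsymbol{\mathcal U}=[\mathbf{U}_{\mathbf{X}},\mathbf{U}_{\mathbf{Z}}]$, $\boldsymbol{\mathcal V}=[\mathbf{V}_{\mathbf{X}},\mathbf{V}_{\mathbf{Z}}]$, and uses $\lambda_{2d}=2\lambda_d$ together with subadditivity of the square root to get $C=\sqrt2$. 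You instead establish the two-sided comparison $\sqrt{c}\,\|\mathbf{Y}\|_\star\le\|\mathbf{Y}\|_\vartriangle\le\sqrt{2c}\,\|\mathbf{Y}\|_\star$ (with $c=\tfrac{1-\overline\theta}{\overline\theta}$), the lower bound via Cauchy--Schwarz plus the variational form of the nuclear norm, the upper bound via an explicit balanced-splitting SVD factorization, and then read off both definiteness and the quasi-triangle inequality with the same constant $\sqrt{2}$ from subadditivity of $\|\cdot\|_\star$. Your route buys something extra: the sandwich shows $\|\cdot\|_\vartriangle$ is equivalent (as a quasi-norm) to the nuclear norm, which dovetails with the convex-envelope result of Proposition 5, and it works with inequalities valid for \emph{every} feasible factorization, so it quietly avoids two points the paper glosses over --- whether the minimum in \eqref{eq:norm} is attained, and the ``we can assume $d_{\mathbf{X}}=d_{\mathbf{Z}}$'' step in the concatenation argument (padding with zero columns is not value-preserving since $\lambda_d$ grows with $d$; the paper's claim needs the column-duplication trick to be justified). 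The paper's proof, in exchange, is more elementary and self-contained, using neither the SVD nor the nuclear norm. Two cosmetic slips on your side: in the final display of the upper bound the middle ``$=$'' should be ``$\le$'' (you are multiplying two separately derived upper bounds), and the parenthetical claim that a genuine norm would require integral ratios $\sigma_k/\sigma_\rho$ is an informal aside, not something your argument establishes; neither affects correctness.
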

	
	\proof Using the definition of quasi-norm, we have to prove the following
	\begin{align}
	& \| \mathbf{X} \|_\vartriangle \geq 0 \qquad \mbox{for every} \; \mathbf{X} \in \mathbb{R}^{m \times n} \label{eq:1} \\
	& \| \mathbf{X} \|_\vartriangle = 0  \Longleftrightarrow \mathbf{X} = \boldsymbol{0} \label{eq:2} \\
	& \| \alpha \mathbf{X} \|_\vartriangle = |\alpha|\|  \mathbf{X} \|_\vartriangle \qquad \mbox{for every} \; \alpha \in \mathbb{R} \; \mbox{and} \; \mathbf{X} \in \mathbb{R}^{m \times n} \label{eq:3} \\
	& \mbox{There exists $C >0$ such that} \; \| \mathbf{X} + \mathbf{Z} \|_\vartriangle \leq C (\|  \mathbf{X} \|_\vartriangle + \| \mathbf{Z} \|_{\vartriangle}) \; \mbox{for every} \; \mathbf{X},\mathbf{Z} \in \mathbb{R}^{m \times n}  \label{eq:4}
	\end{align}
	
	\begin{itemize}[leftmargin=*]
		\item \eqref{eq:1} $-$ Fix $\mathbf{X} \in \mathbb{R}^{m \times n}$ and arbitrary choose a pair of matrices $\mathbf{U}$ and $\mathbf{V}$, of suitable dimensions, such that $\mathbf{U}\mathbf{V}^\top = \mathbf{X}$. We get
		$$\sqrt{ \lambda_d \sum_{k = 1}^d \| \mathbf{u}_k \|_2^2 \| \mathbf{v}_k \|_2^2} \geq 0.$$ Since the very same holds when computing the minimum over $d,\mathbf{U}$ and $\mathbf{V}$, we obtain $\| \mathbf{X} \|_\vartriangle \geq 0.$
		\item \eqref{eq:2} ``$\| \mathbf{X} \|_\vartriangle = 0 \Rightarrow \mathbf{X} = \boldsymbol{0}$'' Let $\overline{\mathbf{U}} \in \mathbb{R}^{m \times \overline{d}}$ and $\overline{\mathbf{V}} \in \mathbb{R}^{n \times \overline{d}}$ such that $$\| \mathbf{X} \|_\vartriangle = \sqrt{ \lambda_d \sum_{k = 1}^d \| \overline{\mathbf{u}}_k \|_2^2 \| \overline{\mathbf{v}}_k \|_2^2}$$ and assume
		$$\sqrt{ \lambda_d \sum_{k = 1}^d \| \overline{\mathbf{u}}_k \|_2^2 \| \overline{\mathbf{v}}_k \|_2^2} = 0.$$
		Then
		$$\sum_{k = 1}^d \| \overline{\mathbf{u}}_k \|_2^2 \| \overline{\mathbf{v}}_k \|_2^2 = 0$$
		since $\lambda_d > 0$ (due to $0 < \theta(d) < 1$) and, also,
		\begin{equation}
		\| \overline{\mathbf{u}}_k \|_2^2 \| \overline{\mathbf{v}}_k \|_2^2 = 0 \qquad \mbox{for every} \; k = 1,\dots,d,
		\end{equation}
		since the summation is composed by non-negative terms. By using the zero-product property, we elicit
		\begin{equation}
		\mbox{for every} \; k = 1,\dots,d \qquad \| \overline{\mathbf{u}}_k \|_2^2 = 0 \; \mbox{or} \; \| \overline{\mathbf{v}}_k \|_2^2 = 0
		\end{equation}
		and
		\begin{equation}
		\mbox{for every} \; k = 1,\dots,d \qquad \| \overline{\mathbf{u}}_k \|_2 = 0 \; \mbox{or} \; \| \overline{\mathbf{v}}_k \|_2 = 0. 
		\end{equation}
		This implies that 
		\begin{equation}\label{eq:key}
		\mbox{for every} \; k = 1,\dots,d \qquad \overline{\mathbf{u}}_k = \boldsymbol{0} \; \mbox{or} \; \overline{\mathbf{v}}_k = \boldsymbol{0} 
		\end{equation}
		since $\| \cdot \|_2$ is a norm. But then, for any $i = 1,\dots,m$ and $j = 1,\dots,n$, the combination of the relationship
		\begin{equation}
		X_{ij} = \sum_{k = 1}^d \overline{U}_{ik} \overline{V}_{jk}
		\end{equation} 
		combined with \eqref{eq:key} gives
		\begin{equation}
		X_{ij} = 0 \quad \mbox{for every} \; i,j
		\end{equation}
		which is the thesis.
		\item \eqref{eq:2} ``$\| \mathbf{X} \|_\vartriangle = 0 \Leftarrow \mathbf{X} = \boldsymbol{0}$'' Assume $\mathbf{X} = \boldsymbol{0}$. Then the optimal decomposition $\mathbf{U}\mathbf{V}^\top = \mathbf{X}$ in the sense of \eqref{eq:norm} will be $\mathbf{U} = \boldsymbol{0}$ and $\mathbf{V} = \boldsymbol{0}$. This implies $\| \mathbf{X} \|_\vartriangle = 0$.
		\item \eqref{eq:3} (\emph{Absolute homogeneity}.) Since we already proved \eqref{eq:2}, we can skip the case $\alpha = 0$ because
		\begin{equation}
		\| 0 \mathbf{X} \|_\vartriangle = \| \boldsymbol{0} \|_\vartriangle \stackrel{\scriptsize \mbox{\eqref{eq:2}}}{=} 0 = 0 \cdot \| \mathbf{X} \|_\vartriangle.
		\end{equation} 
		Hence, let assume $\alpha \neq 0$. In such a case, by definition,
		\begin{equation}
		\| \alpha \mathbf{X} \|_\vartriangle = \minimize{\begin{matrix}
			d \geq \rho(\alpha \mathbf{X}) \\ \mathbf{U} \in \mathbb{R}^{m \times d} \\ \mathbf{V} \in \mathbb{R}^{n \times d} \\ {\rm s.t.}~\mathbf{U}\mathbf{V}^\top = \alpha \mathbf{X} \end{matrix}} \sqrt{ \lambda_d \sum_{k = 1}^d \| \mathbf{u}_k \|_2^2 \| \mathbf{v}_k \|_2^2}.
		\end{equation}
		Since $\alpha \neq 0$,
		\begin{equation}
		\| \alpha \mathbf{X} \|_\vartriangle = \minimize{\begin{matrix}
			d \geq \rho(\mathbf{X}) \\ \mathbf{U} \in \mathbb{R}^{m \times d} \\ \mathbf{V} \in \mathbb{R}^{n \times d} \\ {\rm s.t.}~\mathbf{U}\mathbf{V}^\top = \alpha \mathbf{X} \end{matrix}} \sqrt{ \lambda_d \sum_{k = 1}^d \| \mathbf{u}_k \|_2^2 \| \mathbf{v}_k \|_2^2}.
		\end{equation}
		Equivalently,
		\begin{equation}
		\| \alpha \mathbf{X} \|_\vartriangle = |\alpha |\minimize{\begin{matrix}
			d \geq \rho(\mathbf{X}) \\ \mathbf{U} \in \mathbb{R}^{m \times d} \\ \mathbf{V} \in \mathbb{R}^{n \times d} \\ {\rm s.t.}~(\frac{1}{\alpha} \mathbf{U})\mathbf{V}^\top = \mathbf{X} \end{matrix}} \sqrt{ \lambda_d \sum_{k = 1}^d \left \| \frac{1}{\alpha} \mathbf{u}_k \right \|_2^2 \| \mathbf{v}_k \|_2^2}.
		\end{equation}
		Since the transformation $\mathbf{U} \mapsto \widetilde{\mathbf{U}} := \frac{1}{\alpha} \mathbf{U}$ is invertible, we get
		\begin{equation}
		\| \alpha \mathbf{X} \|_\vartriangle = |\alpha |\minimize{\begin{matrix}
			d \geq \rho(\mathbf{X}) \\ \widetilde{\mathbf{U}} \in \mathbb{R}^{m \times d} \\ \mathbf{V} \in \mathbb{R}^{n \times d} \\ {\rm s.t.}~ \widetilde{\mathbf{U}}\mathbf{V}^\top = \mathbf{X} \end{matrix}} \sqrt{ \lambda_d \sum_{k = 1}^d \left \| \widetilde{\mathbf{u}}_k \right \|_2^2 \| \mathbf{v}_k \|_2^2} = |\alpha | \| \mathbf{X} \|_\vartriangle.
		\end{equation}
		\item \eqref{eq:4} $-$ (\emph{Generalized triangle inequality}.) Fix two arbitrary $m \times n$ matrices $\mathbf{X}$ and $\mathbf{Z}$. Let $\mathbf{U}_\mathbf{X} \in \mathbb{R}^{m \times d_\mathbf{X}},\mathbf{V}_\mathbf{X} \in \mathbb{R}^{n \times d_\mathbf{X}}$ the pairs of matrices which realize the minimum in $\| \mathbf{X} \|_\vartriangle$ and let $\mathbf{U}_\mathbf{Z} \in \mathbb{R}^{m \times d_\mathbf{Z}},\mathbf{V}_\mathbf{Z} \in \mathbb{R}^{n \times d_\mathbf{Z}}$ the same for $\| \mathbf{Z} \|_\vartriangle$. Define $\boldsymbol{\mathcal{U}} = [\mathbf{U}_\mathbf{X},\mathbf{U}_\mathbf{Z}]$ and $\boldsymbol{\mathcal{V}} = [\mathbf{V}_\mathbf{X},\mathbf{V}_\mathbf{Z}]$. Then,
		\begin{equation}
		\boldsymbol{\mathcal{U}} \boldsymbol{\mathcal{V}}^\top = \mathbf{U}_\mathbf{X}\mathbf{V}_\mathbf{X}^\top + \mathbf{U}_\mathbf{Z}\mathbf{V}_\mathbf{Z}^\top = \mathbf{X} + \mathbf{Z}
		\end{equation} 
		and notice that we can assume that $d_{\mathbf{X}} = d_{\mathbf{Z}} = d$. Indeed, in the arbitrary case, we can exploit the fact that $\lambda_{d_\mathbf{X} + d_\mathbf{Z}}$ can be bounded by $\lambda_{2 \max(d_\mathbf{X},d_\mathbf{Z})}$ and still apply the same reasoning. Therefore
		\begin{align*}
		\| \mathbf{X} + \mathbf{Z} \|_\vartriangle \leq \sqrt{ \lambda_{2d} \sum_{k = 1}^{2d} \| \boldsymbol{\mathcal{U}}_{:,k} \|_2^2 \| \boldsymbol{\mathcal{V}}_{:,k} \|_2^2},
		\end{align*}
		where the minimal value for $\| \mathbf{X} + \mathbf{Z} \|_\vartriangle$ induced by the optimal factorization, can be bounded by the analogous corresponding to $(\boldsymbol{\mathcal{U}},\boldsymbol{\mathcal{V}})$, each having $2d$ columns. Then,
		\begin{align*}
		\| \mathbf{X} + \mathbf{Z} \|_\vartriangle \leq \sqrt{ \lambda_{2d} \sum_{k = 1}^{d} \| [{\mathbf{u}_\mathbf{X}}]_k \|_2^2 \| [{\mathbf{v}_\mathbf{X}}]_k \|_2^2 + \lambda_{2d} \sum_{k = 1}^{d} \| [{\mathbf{u}_\mathbf{Z}}]_k \|_2^2 \| [{\mathbf{v}_\mathbf{Z}}]_k \|_2^2}.
		\end{align*}
		Since the square root is a sub-additive function,
		\begin{align*}
		\| \mathbf{X} + \mathbf{Z} \|_\vartriangle &\leq \sqrt{ \lambda_{2d} \sum_{k = 1}^{d} \| [{\mathbf{u}_\mathbf{X}}]_k \|_2^2 \| [{\mathbf{v}_\mathbf{X}}]_k \|_2^2} + \sqrt{ \lambda_{2d} \sum_{k = 1}^{d} \| [{\mathbf{u}_\mathbf{Z}}]_k \|_2^2 \| [{\mathbf{v}_\mathbf{Z}}]_k \|_2^2} \\ &= \sqrt{ 2 \lambda_{d} \sum_{k = 1}^{d} \| [{\mathbf{u}_\mathbf{X}}]_k \|_2^2 \| [{\mathbf{v}_\mathbf{X}}]_k \|_2^2} + \sqrt{ 2 \lambda_{d} \sum_{k = 1}^{d} \| [{\mathbf{u}_\mathbf{Z}}]_k \|_2^2 \| [{\mathbf{v}_\mathbf{Z}}]_k \|_2^2}.
		\end{align*}
		Exploiting the relationship $\lambda_{2d} = 2 \lambda_{d}$ and the definitions of $\mathbf{U}_{\mathbf{X}}$, $\mathbf{V}_{\mathbf{X}}$, $\mathbf{U}_{\mathbf{Z}}$ and $\mathbf{V}_{\mathbf{Z}}$. Then,
		\begin{align*}
		\| \mathbf{X} + \mathbf{Z} \|_\vartriangle &\leq \sqrt{ 2 \lambda_{d} \sum_{k = 1}^{d} \| [{\mathbf{u}_\mathbf{X}}]_k \|_2^2 \| [{\mathbf{v}_\mathbf{X}}]_k \|_2^2} + \sqrt{ 2 \lambda_{d} \sum_{k = 1}^{d} \| [{\mathbf{u}_\mathbf{Z}}]_k \|_2^2 \| [{\mathbf{v}_\mathbf{Z}}]_k \|_2^2} \\ & = \sqrt{2} (\| \mathbf{X} \|_\vartriangle + \| \mathbf{Z} \|_\vartriangle).
		\end{align*}
		We conclude by choosing $C := \sqrt{2}.$ \endproof
	\end{itemize}
	
	\begin{prop}
		The convex envelope of $\tfrac{1}{2} \| \mathbf{X}\|^2_\vartriangle$ is $\tfrac{1-\bar{\theta}}{2 \bar{\theta}} \| \mathbf{X}\|^2_\star$.
	\end{prop}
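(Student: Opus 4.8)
The plan is to identify the requested convex envelope with the Fenchel biconjugate and compute it through the conjugate of the regularizer. First I would factor out the constant: since $\lambda_d = d\,\tfrac{1-\bar{\theta}}{\bar{\theta}}$, we have $\tfrac12\|\mathbf{Y}\|_\vartriangle^2 = \tfrac{1-\bar{\theta}}{2\bar{\theta}}\,h(\mathbf{Y})$, where $h(\mathbf{Y}) := \inf\{\, d\sum_{k=1}^d \|\mathbf{u}_k\|_2^2\|\mathbf{v}_k\|_2^2 : \mathbf{U}\mathbf{V}^\top=\mathbf{Y},\ \mathbf{U}\in\mathbb{R}^{m\times d},\ \mathbf{V}\in\mathbb{R}^{n\times d},\ d\ge\rho(\mathbf{Y})\,\}$. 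Because taking the convex envelope commutes with multiplication by a positive scalar (equivalently $(cf)^{**}=cf^{**}$ for $c>0$), it suffices to prove that the convex envelope of $h$ is $\|\cdot\|_\star^2$; and since $h\ge 0$ is minorized by an affine function and the claimed envelope is everywhere finite and continuous, that convex envelope coincides with $h^{**}$ (conjugates taken with respect to the trace inner product $\langle\mathbf{A},\mathbf{B}\rangle=\mathrm{tr}(\mathbf{A}^\top\mathbf{B})$). So the whole argument reduces to computing $h^*$ and then $(h^*)^*$.

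The first ingredient is the pointwise lower bound $h(\mathbf{Y})\ge\|\mathbf{Y}\|_\star^2$. For any admissible factorization $\mathbf{U}\mathbf{V}^\top=\mathbf{Y}$ with $d$ columns, Cauchy--Schwarz gives $\big(\sum_{k=1}^d\|\mathbf{u}_k\|_2\|\mathbf{v}_k\|_2\big)^2\le d\sum_{k=1}^d\|\mathbf{u}_k\|_2^2\|\mathbf{v}_k\|_2^2$, and the variational form of the nuclear norm \eqref{eq:Nstar} bounds the left-hand side below by $\|\mathbf{Y}\|_\star^2$; taking the infimum over factorizations yields the claim. This is already the ``easy half'': since $\|\cdot\|_\star^2$ is convex and $\le h$, it is $\le h^{**}$, so only the reverse inequality remains.

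Next I would compute $h^*$ exactly. From $h\ge\|\cdot\|_\star^2$ we get $h^*\le(\|\cdot\|_\star^2)^*$, and, using that the spectral norm $\sigma_1(\cdot)$ is dual to the nuclear norm together with the elementary identity $(\tfrac12 N^2)^*=\tfrac12 N_*^2$ for norms, this gives $(\|\cdot\|_\star^2)^*(\mathbf{W})=\tfrac14\sigma_1(\mathbf{W})^2$. For the matching lower bound on $h^*$ I would probe the definition $h^*(\mathbf{W})=\sup_{\mathbf{Y}}\langle\mathbf{W},\mathbf{Y}\rangle-h(\mathbf{Y})$ only on rank-one matrices: if $\mathbf{a},\mathbf{b}$ are leading left/right singular vectors of $\mathbf{W}$ then $\langle\mathbf{W},t\,\mathbf{a}\mathbf{b}^\top\rangle=t\,\sigma_1(\mathbf{W})$ for $t\ge 0$, while the trivial single-column factorization $\mathbf{u}_1=\sqrt{t}\,\mathbf{a}$, $\mathbf{v}_1=\sqrt{t}\,\mathbf{b}$ (so $d=1$) gives $h(t\,\mathbf{a}\mathbf{b}^\top)\le t^2$; hence $h^*(\mathbf{W})\ge\sup_{t\ge0}\big(t\,\sigma_1(\mathbf{W})-t^2\big)=\tfrac14\sigma_1(\mathbf{W})^2$. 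Combining the two bounds, $h^*(\mathbf{W})=\tfrac14\sigma_1(\mathbf{W})^2$. Finally, conjugating once more and writing $\mathbf{W}=s\mathbf{W}'$ with $\sigma_1(\mathbf{W}')=1$, $s\ge0$, turns $\sup_{\mathbf{W}}\langle\mathbf{W},\mathbf{Y}\rangle-\tfrac14\sigma_1(\mathbf{W})^2$ into $\sup_{s\ge0}\big(s\|\mathbf{Y}\|_\star-\tfrac14 s^2\big)=\|\mathbf{Y}\|_\star^2$ by spectral/nuclear duality, so $h^{**}=\|\cdot\|_\star^2$; multiplying back by $\tfrac{1-\bar{\theta}}{2\bar{\theta}}$ completes the proof.

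I expect the conceptual crux to be the exact evaluation of $h^*$ despite $h$ being neither convex nor pointwise equal to $\|\cdot\|_\star^2$: the lower bound $h\ge\|\cdot\|_\star^2$ pins $h^*$ from above, and it is enough to test $h$ against rank-one matrices — where only the trivial \emph{upper} bound $h(t\,\mathbf{a}\mathbf{b}^\top)\le t^2$ is needed — to pin $h^*$ from below, and the two bounds happen to agree. A secondary technical point is careful bookkeeping of the constants $\tfrac14$ and $\tfrac{1-\bar{\theta}}{2\bar{\theta}}$, together with the observation that whether the $\min$ in the definition of $\|\cdot\|_\vartriangle$ is attained is immaterial, since Fenchel conjugation is insensitive to that distinction.
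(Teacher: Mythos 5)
Your proof is correct, and it reaches the paper's conclusion by a route that differs in its key computational step. Both arguments identify the convex envelope with the Fenchel biconjugate, but the paper computes the conjugate $\Theta^*$ of $\tfrac12\|\cdot\|_\vartriangle^2$ \emph{directly}: it rewrites the variational definition with unit-norm columns and a scale vector $\Lambda$, recognizes the inner supremum over $\Lambda$ as the conjugate of the squared $\ell_2$ norm, and then argues that the outer supremum over $(\mathbf{U},\mathbf{V})$ is attained by taking every column pair equal to the leading singular vectors of $\mathbf{Q}$, which yields $\Theta^*(\mathbf{Q}) = \tfrac{\bar\theta}{1-\bar\theta}\tfrac{\sigma^2(\mathbf{Q})}{2}$ thanks to the cancellation $d/\lambda_d = \bar\theta/(1-\bar\theta)$. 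You instead evaluate the conjugate of $h(\mathbf{Y}) = \inf_{d,\,\mathbf{U}\mathbf{V}^\top=\mathbf{Y}}\, d\sum_k\|\mathbf{u}_k\|_2^2\|\mathbf{v}_k\|_2^2$ by a two-sided sandwich: the Cauchy--Schwarz bound $\bigl(\sum_k\|\mathbf{u}_k\|_2\|\mathbf{v}_k\|_2\bigr)^2 \le d\sum_k\|\mathbf{u}_k\|_2^2\|\mathbf{v}_k\|_2^2$ combined with the variational form of the nuclear norm gives $h \ge \|\cdot\|_\star^2$ and hence $h^* \le \tfrac14\sigma_1^2$, while rank-one probes with the single-column factorization give the matching lower bound; a second conjugation then recovers $\|\cdot\|_\star^2$, and the constant $\tfrac{1-\bar\theta}{2\bar\theta}$ is pulled out at the start. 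What your route buys is that the role of the linear-in-$d$ scaling $\lambda_d$ becomes transparent (it is exactly the Cauchy--Schwarz factor needed to dominate the squared nuclear norm), and you sidestep the paper's exchange-of-suprema step, where the claim that the optimum is attained by replicating the top singular pair is asserted with little justification; what the paper's route buys is an explicit closed form for the conjugate $\Theta^*$ itself, obtained in one pass, rather than only the biconjugate. The constants ($\tfrac14$ from $(\|\cdot\|_\star^2)^* = \tfrac14\sigma_1^2$, and $\tfrac{1-\bar\theta}{2\bar\theta}$ from $\lambda_d = d\tfrac{1-\bar\theta}{\bar\theta}$) check out, and your remark that attainment of the minimum in the definition of $\|\cdot\|_\vartriangle$ is irrelevant for conjugation is accurate.
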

	
	\proof  First, recall that the convex envelope of a function $f$ is the largest closed, convex function $g$ such that $g(x) \leq f(x)$ for all $x$ and is given by $g = (f^*)^*$, where $f^*$ denotes the Fenchel dual of $f$, defined as $f^*(q) \equiv \sup_x \left<q,x\right>-f(x)$.  Let $\Theta(\mathbf{X}) = \tfrac{1}{2}\| \mathbf{X} \|_\vartriangle^2$, given by
	\begin{equation}
	\Theta(\mathbf{X}) = \inf_{\begin{matrix}
		d \geq \rho(\mathbf{X}) \\ \mathbf{U} \in \mathbb{R}^{m \times d} \\ \mathbf{V} \in \mathbb{R}^{n \times d} \\ {\rm s.t.}~\mathbf{U}\mathbf{V}^\top = \mathbf{X} \end{matrix}} \frac{\lambda_d}{2} \sum_{k = 1}^d \| \mathbf{u}_k \|_2^2 \| \mathbf{v}_k \|_2^2.
	\end{equation}
	and note that this can be equivalently written by the equation
	\begin{equation}
	\Theta( \mathbf{X} ) = \inf_{\begin{matrix}
		d \geq \rho(\mathbf{X}) \\ \mathbf{U} \in \mathbb{R}^{m \times d} \\ \mathbf{V} \in \mathbb{R}^{n \times d} \\ \Lambda \in \mathbb{R}^d \end{matrix}} \frac{\lambda_d}{2} \|\Lambda\|_2^2 \ \ {\rm s.t.} \ \ \sum_{k=1}^d \Lambda_k \mathbf{u}_k \mathbf{v}_k^T = \mathbf{X} \ \ {\rm and} \ \ (\| \mathbf{u}_k \|_2,\| \mathbf{v}_k \|_2) \leq (1,1) \ \ \forall k.
	\end{equation}
	This gives the Fenchel dual of $\Theta$ as
	\begin{equation}
	\label{eq:lambda_dot}
	\Theta^*(\mathbf{Q}) = \sup_d \sup_{\begin{matrix} \mathbf{U} \in \mathbb{R}^{m \times d} \\ \mathbf{V} \in \mathbb{R}^{n \times d} \\ \Lambda \in \mathbb{R}^d \end{matrix}} \sum_{k=1}^d \Lambda_k \left< \mathbf{Q},\mathbf{u}_k \mathbf{v}_k^T \right> - \frac{\lambda_d}{2} \|\Lambda\|_2^2 \ \ {\rm s.t.} \ \  (\| \mathbf{u}_k \|_2,\| \mathbf{v}_k \|_2) \leq (1,1) \ \ \forall k.
	\end{equation}
	Now, note that if we define the vector $\mathbf{B}_d(\mathbf{U},\mathbf{V}) \in \mathbb{R}^d$ as
	\begin{equation}
	\mathbf{B}_d(\mathbf{U},\mathbf{V}) = \begin{bmatrix} \left<\mathbf{Q},\mathbf{u}_1 \mathbf{v}_1^T \right> \\ \left<\mathbf{Q},\mathbf{u}_2 \mathbf{v}_2^T \right> \\ \vdots \\ \left<\mathbf{Q},\mathbf{u}_d \mathbf{v}_d^T \right> \end{bmatrix},
	\end{equation}
	then from \eqref{eq:lambda_dot} we have that
	\begin{align}
	\Theta^*(\mathbf{Q}) &= \sup_d \sup_{\begin{matrix} \mathbf{U} \in \mathbb{R}^{m \times d} \\ \mathbf{V} \in \mathbb{R}^{n \times d} \end{matrix}} \sup_{\Lambda \in \mathbb{R}^d} \left<\mathbf{B}_d(\mathbf{U},\mathbf{V}), \Lambda \right> - \frac{\lambda_d}{2} \|\Lambda\|_2^2 \ \ {\rm s.t.} \ \  (\| \mathbf{u}_k \|_2,\| \mathbf{v}_k \|_2) \leq (1,1) \ \ \forall k \\
	\label{eq:fenchel_B}
	&= \sup_d \sup_{\begin{matrix} \mathbf{U} \in \mathbb{R}^{m \times d} \\ \mathbf{V} \in \mathbb{R}^{n \times d} \end{matrix}} \frac{1}{2 \lambda_d} \|\mathbf{B}_d(\mathbf{U},\mathbf{V}) \|_2^2  \ \ {\rm s.t.} \ \  (\| \mathbf{u}_k \|_2,\| \mathbf{v}_k \|_2) \leq (1,1) \ \ \forall k.
	\end{align}
	where the final equality comes from noting that the supremum w.r.t. $\Lambda$ is the definition of the Fenchel dual of the squared $\ell_2$ norm evaluated at $\mathbf{B}_d(\mathbf{U},\mathbf{V})$.
	
	Now, from\eqref{eq:fenchel_B} and the definition of $\mathbf{B}_d(\mathbf{U},\mathbf{V})$ note that for a fixed value of $d$, \eqref{eq:fenchel_B} is optimized w.r.t. $(\mathbf{U},\mathbf{V})$ by choosing all the columns of $(\mathbf{U},\mathbf{V})$ to be equal to the maximum singular vector pair, given by
	\begin{equation}
	\sup_{\mathbf{u} \in \mathbb{R}^m, \mathbf{v} \in \mathbb{R}^n} \left< \mathbf{Q},\mathbf{u} \mathbf{v}^T \right> \ \ {\rm s.t.} \ \  (\| \mathbf{u} \|_2,\| \mathbf{v} \|_2) \leq (1,1).
	\end{equation}
	Note also that for this optimal choice of $(\mathbf{U},\mathbf{V})$ we have that $\mathbf{B}_d(\mathbf{U},\mathbf{V}) = \sigma( \mathbf{Q}) \mathbf{1}_d$ where $\sigma( \mathbf{Q})$ denotes the largest singular value of $\mathbf{Q}$ and $\mathbf{1}_d$ is a vector of all ones of size $d$.  Plugging this in \eqref{eq:fenchel_B} gives
	\begin{equation}
	\Theta^*(\mathbf{Q}) = \sup_d \frac{1}{2\lambda_d}\|\sigma( \mathbf{Q}) \mathbf{1}_d \|_2^2 = \sup_d \frac{\sigma^2( \mathbf{Q}) d}{2 \lambda_d} = \left(\frac{\bar{\theta}}{1-\bar{\theta}}\right) \frac{\sigma^2( \mathbf{Q})}{2},
	\end{equation}
	where recall $\lambda_d = d (1-\bar{\theta}) / \bar{\theta}$.  The result then follows by noting the well-known duality between the spectral norm (largest singular value) and the nuclear norm and basic properties of the Fenchel dual.
	\endproof
	
	\begin{prop}
		Let $\mathbf{X} = \mathbf{L} \boldsymbol{\Sigma} \mathbf{R}^\top$ be the singular value decomposition of $\mathbf{X}$. The optimal solution to
		\begin{equation}\label{eq:mid-term}
		\min_{\mathbf{Y}} \quad \| \mathbf{X} - \mathbf{Y} \|_F^2 + \lambda \| \mathbf{Y} \|_\star^2
		\end{equation}
		is given by $\mathbf{Y} = \mathbf{L} \S_{\mu}(\boldsymbol{\Sigma}) \mathbf{R}^\top$, where $\lambda > 0$, $\mu = \frac{\lambda d}{1 + \lambda d} \bar\sigma_d(\mathbf{X})$, $\bar\sigma_d(\mathbf{X})$ is the average of the top $d$ singular values of $\mathbf{X}$, $d$ represents the largest integer such that $\sigma_d(\mathbf{X}) > \frac{\lambda d}{1+\lambda d} \bar\sigma_d(\mathbf{X})$, and $\S_{\mu}$ is defined as the shrinkage thresholding operator which set to zero all singular values of $\mathbf{X}$ which are less or equal to $\mu$.
	\end{prop}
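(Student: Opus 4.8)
The plan is to reduce the matrix problem to a scalar problem over singular values, solve that by convexity and KKT, and then pin down the shrinkage level by a self-consistency (fixed-point) argument.

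\textbf{Step 1: reduction to singular values.} Both the Frobenius and nuclear norms are unitarily invariant, so I would use von Neumann's trace inequality $\langle \mathbf{X},\mathbf{Y}\rangle \le \sum_i \sigma_i(\mathbf{X})\sigma_i(\mathbf{Y})$ to write, for any candidate $\mathbf{Y}$ with singular values $\tau_1 \ge \tau_2 \ge \cdots \ge 0$,
\begin{equation}\nonumber
\| \mathbf{X} - \mathbf{Y} \|_F^2 = \|\mathbf{X}\|_F^2 - 2\langle \mathbf{X},\mathbf{Y}\rangle + \|\mathbf{Y}\|_F^2 \ge \sum_i \big(\sigma_i(\mathbf{X}) - \tau_i\big)^2,
\end{equation}
with equality when $\mathbf{Y} = \mathbf{L}\,\mathrm{diag}(\tau)\,\mathbf{R}^\top$, while $\|\mathbf{Y}\|_\star = \sum_i \tau_i$ is unchanged. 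Hence it suffices to solve the convex program $\min_{\tau \ge 0} \sum_i (\sigma_i - \tau_i)^2 + \lambda (\sum_i \tau_i)^2$, writing $\sigma_i = \sigma_i(\mathbf{X})$.

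\textbf{Step 2: KKT and soft-thresholding.} This is convex with only nonnegativity constraints, so I would write the stationarity and complementary-slackness conditions. The gradient of the penalty in coordinate $i$ is $2\lambda\sum_j \tau_j$, which is \emph{the same} for all $i$; calling this common quantity $2\mu$ with $\mu := \lambda\sum_j \tau_j \ge 0$, the KKT conditions give $\tau_i = \max(\sigma_i - \mu,\,0)$ for every $i$. Equivalently the optimal $\mathbf{Y}$ is $\mathbf{L}\,\S_\mu(\boldsymbol{\Sigma})\,\mathbf{R}^\top$, i.e. soft-thresholding of $\boldsymbol{\Sigma}$ at level $\mu$ — provided $\mu$ is chosen consistently with the defining relation $\mu = \lambda\sum_j \tau_j$.

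\textbf{Step 3: closing the loop on $\mu$ (the crux).} Substituting $\tau_i = \max(\sigma_i - \mu,0)$ into $\mu = \lambda\sum_j \tau_j$: if $d$ is the number of indices with $\sigma_i > \mu$, then $\mu = \lambda\sum_{i\le d}(\sigma_i - \mu) = \lambda\big(d\bar\sigma_d - d\mu\big)$, whence $\mu = \tfrac{\lambda d}{1+\lambda d}\bar\sigma_d$. It remains to show this $d$ is precisely the largest integer with $\sigma_d > \tfrac{\lambda d}{1+\lambda d}\bar\sigma_d$ and that the characterization is unambiguous. I would set $\mu_d := \tfrac{\lambda d}{1+\lambda d}\bar\sigma_d$ and check the monotonicity facts needed to make the ``largest $d$'' rule well posed: that once $\sigma_d > \mu_d$ fails it continues to fail for all larger indices, and that for the selected $d$ one has $\sigma_d > \mu_d \ge \sigma_{d+1}$, so that exactly $d$ singular values survive thresholding at $\mu_d$ and $\mu = \mu_d$ is indeed the fixed point (and the unique one, since the objective is strictly convex in the nonzero block of $\tau$). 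I expect this last bookkeeping — reconciling the threshold $\mu_d$ with the count of singular values it leaves positive — to be the main obstacle; Steps 1 and 2 are essentially routine.
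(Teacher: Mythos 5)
Your route is essentially the paper's: reduce to a problem on the singular values, use convexity and first-order optimality to get uniform soft-thresholding at the level $\mu=\lambda\sum_j\tau_j$, and then close the fixed point $\mu=\tfrac{\lambda d}{1+\lambda d}\bar\sigma_d$. (The paper phrases the optimality step via the subgradient of $\|\cdot\|_1$ after arguing the optimizer is nonnegative, while you impose $\tau\ge 0$ and use KKT; your explicit von Neumann argument in Step~1 is in fact a more careful justification of the reduction than the paper's one-line appeal to rotational invariance.) The only piece you leave open, the Step~3 ``bookkeeping,'' is not really an obstacle: it is exactly the paper's closing chain of inequalities, and it follows from the identity $(d+1)\bar\sigma_{d+1}=d\bar\sigma_d+\sigma_{d+1}$. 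Indeed, setting $\mu_k=\tfrac{\lambda k}{1+\lambda k}\bar\sigma_k$, one has the equivalence $\sigma_{d+1}\le\mu_{d+1}\iff \sigma_{d+1}(1+\lambda d)\le\lambda d\,\bar\sigma_d\iff \sigma_{d+1}\le\mu_d$; maximality of $d$ gives the left-hand statement (when $d<r$), hence $\sigma_{d+1}\le\mu_d<\sigma_d$, so thresholding at $\mu_d$ keeps exactly the top $d$ values and $\mu=\mu_d$ solves the self-consistency equation. The same equivalence, combined with $\sigma_{k+1}\le\sigma_k$, also yields the monotonicity you wanted (once $\sigma_k>\mu_k$ fails it fails for all larger $k$), and uniqueness is guaranteed as you note by strict convexity of $\|\tau-\sigma\|_2^2+\lambda(\mathbf{1}^\top\tau)^2$. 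With that two-line computation inserted, your proof is complete and matches the paper's.
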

	
	\proof Since both the nuclear norm $\| \cdot \|_\star$ and the Frobenius norm $\| \cdot \|_F$ are rotationally invariant, up to non-restrictive rotations applied to the data matrix $\mathbf{X}$, the thesis can be equivalently proved by considering the following result.
	
	Let $\mathbf{x} = [x_1, \dots, x_r]$ a fixed vector with $x_i \geq x_{i+1} > 0$. Define $\mu_d$ as the average of the first $d$ entries of $\mathbf{x}$ Then, the optimal solution to the optimization problem
	\begin{equation}
	\min_{\mathbf{a} \in \mathbb{R}^r} \| \mathbf{a} - \mathbf{x} \|_2^2 + \lambda \| \mathbf{a} \|_1^2
	\end{equation}
	is given by $\mathbf{a} = [a_1,\dots,a_r]$ where
	\begin{equation}\label{eq:quelli}
	a_i = \begin{cases} x_i - \dfrac{\lambda d}{ 1 + \lambda d}\mu_d & i = 1,\dots,d \\ 0 & i = d + 1,\dots,r
	\end{cases} 
	\end{equation}
	where $d$ is the largest positive integer less or equal to $r$ such that all ai given in \eqref{eq:quelli} are positive. 
	
	In order to prove this claim, first note that the objective function is strictly convex and, hence, there is a unique global minimum. If $\lambda = 0$ the global minimizer is precisely $\mathbf{x}$, which is consistent with the formula given in the statement of the proposition. So, suppose that $ \lambda > 0$. Next, notice that if $\mathbf{a} = [a_1, a_2, \dots, a_r]$ is an optimal solution, then all $a_i$ must be non-negative. Indeed, if say $a_1 < 0$, then the vector $[-a_1, a_2, \dots, a_r]$ already gives a smaller
	objective value. Now, the first order optimality condition of our problem rewrites
	\begin{equation}\label{eq:sonno}
	\boldsymbol{0} \in (\mathbf{a} - \mathbf{x}) + \lambda \| \mathbf{a} \|_1 \partial \| \mathbf{a} \|_1. 
	\end{equation}
	There are two cases for each coordinate $i$ of \eqref{eq:sonno}.
	\begin{equation}\label{eq:zzz}
	\mbox{$a_i = x_i - \lambda \| \mathbf{a} \|_1$, if $a_i > 0$, and $x_i = \lambda \| \mathbf{a} \|_1 \xi_i,$ if $a_i = 0$.}
	\end{equation}
	where $\xi_i$ in \eqref{eq:zzz} is some number in the interval $[0, 1]$. Notice that since $x_i > 0$ for every $i$, the second condition in \eqref{eq:zzz} guarantees that the global solution can not be the zero vector, otherwise $\| \mathbf{a} \|_1 = 0$ and so $x_i = 0$ for every $i$. Thus, suppose that exactly the first $k \geq 1$ coordinates of $\mathbf{a}$ are non-zero. Then sum the equations $a_i = x_i - \lambda k \| \mathbf{a} \|_1$ for $i = 1, \dots, k$. We get
	\begin{equation}
	\| \mathbf{a} \|_1 = k \mu_k - \lambda k \|\mathbf{a} \|_1
	\end{equation}
	which gives
	\begin{equation}\label{eq:zzzz}
	\| \mathbf{a} \|_1 = \dfrac{k}{1 + \lambda k} \mu_k.
	\end{equation}
	Then \eqref{eq:zzz} and \eqref{eq:zzzz} give
	\begin{equation}\label{eq:Bday}
	\mbox{$a_i = x_i - \dfrac{\lambda k}{1 + \lambda k} \mu_k > 0$ for $i = 1,\dots,k$ and $a_i = 0$ for $i = k + 1,\dots,r.$}
	\end{equation}
	Now, let $d$ be the largest integer such that $a_i = x_i - \dfrac{\lambda d}{1 +  \lambda d} \mu_d > 0$ and define the vector
	\begin{equation}
	\mathbf{v} = \left[ x_1 - \dfrac{\lambda d}{1 +  \lambda d} \mu_d, \dots, x_d - \dfrac{\lambda d}{1 +  \lambda d} \mu_d, \underbrace{0, \dots, 0}_{\mbox{$r-d$ times}} \right].
	\end{equation}
	If $d = r$, then $\mathbf{v}$ satisfies the optimality condition \eqref{eq:zzz} and so it is the global minimizer. So suppose that $d < r$. In that case, to show that $\mathbf{v}$ is the global minimizer it suffices to show that
	\begin{equation}
	x_{d + 1} - \dfrac{\lambda d}{1 +  \lambda d} \mu_d \leq 0.
	\end{equation}
	since this is equivalent to saying that for any $i > $d there exists $\xi_i \in [0, 1]$ such that $x_i = \lambda \| \mathbf{v} \|_1 \xi_i$ in which case $\mathbf{v}$ satisfies the optimality condition \eqref{eq:zzz}. Now by the maximality of $d$, we have that
	\begin{equation}
	x_{d + 1} - \dfrac{\lambda (d + 1)}{1 +  \lambda (d + 1)} \mu_{d + 1} \leq 0.
	\end{equation}
	Equivalently, we get the following chain of inequalities
	\begin{align}
	\left( 1 - \dfrac{\lambda}{1 + \lambda(d + 1)}\right) x_{d + 1} - \dfrac{\lambda}{1 + \lambda(d +1 )} \sum_{k = 1}^d x_k \leq 0 \\
	\dfrac{1 + \lambda d}{1 + \lambda(d + 1)} x_{d + 1} - \dfrac{\lambda}{1 + \lambda(d +1 )} \sum_{k = 1}^d x_k  \leq 0 \\
	x_{d + 1} - \dfrac{\lambda d}{1 + \lambda d} \mu_d  \leq 0
	\end{align}
	from which we obtain the desired condition. \endproof

\end{document}